\useunder{\uline}{\ul}{}
\theoremstyle{definition}
\newtheorem{definition}{Definition}
\newtheorem{theorem}{Theorem}
\journal{Neural Networks}
\begin{document}

\begin{frontmatter}



\title{Enhancing Signed Graph Neural Networks through
Curriculum-Based Training}

\author[1]{Zeyu Zhang}  
\author[1]{Lu Li}  
\author[1]{Xingyu Ji}  
\author[2]{Kaiqi Zhao}  
\author[3]{Xiaofeng Zhu}  
\author[4]{Philip S. Yu} 
\author[1]{Jiawei Li}  
\author[1]{Maojun Wang} 

\affiliation[1]{organization={ the College of the Informatics, Huazhong Agricultural University}}

\affiliation[2]{organization={the School of Computer Science, University of Auckland}}

\affiliation[3]{organization={the School of Computer Science and Engineering, University of Electronic Science and Technology of China}}

\affiliation[4]{organization={the department of Computer Science, University of Illinois}}
                        


\begin{abstract}
Signed graphs are powerful models for representing complex relations with both positive and negative connections. Recently, Signed Graph Neural Networks (SGNNs) have emerged as potent tools for analyzing such graphs. To our knowledge, no prior research has been conducted on devising a training plan specifically for SGNNs. The prevailing training approach feeds samples (edges) to models in a random order, resulting in equal contributions from each sample during the training process, but fails to account for varying learning difficulties based on the graph’s structure. We contend that SGNNs can benefit from a curriculum that progresses from easy to difficult, similar to human learning. The main challenge is evaluating the difficulty of edges in a signed graph. We address this by theoretically analyzing the difficulty of SGNNs in learning adequate representations for edges in unbalanced cycles and propose a lightweight difficulty measurer. This forms the basis for our innovative \underline{C}urriculum representation learning framework for \underline{S}igned \underline{G}raphs, referred to as \textbf{CSG}. The process involves using the measurer to assign difficulty scores to training samples, adjusting their order using a scheduler and training the SGNN model accordingly.  
We empirically our approach on six real-world signed graph datasets. Our method demonstrates remarkable results, enhancing the accuracy of popular SGNN models by up to 23.7\% and showing a reduction of 8.4\% in standard deviation, enhancing model stability. Our implementation is available in PyTorch\footnote{https://github.com/Alex-Zeyu/CSG}.  
\end{abstract}


\begin{highlights}
\item We are pioneering research on training methods for signed graph neural networks.
\item Our work provides a theoretical analysis of the limitations in current SGNNs.
\item We introduce a tool to assess sample complexity.
\item We introduce a novel curriculum 
 learning approach for signed graphs(CSG).
\item The CSG method improves the performance and stability of backbone models.
\end{highlights}

\begin{keyword}
Graph Neural Networks \sep Signed Graph representation learning \sep Curriculum Learning


\end{keyword}

\end{frontmatter}


\section{Introduction}



Online social networks,  recommendation system, cryptocurrency platforms, and even genomic-phenotype association studies have led to a significant accumulation of graph datasets. 
To analyze these graph datasets, graph representation learning \cite{velivckovic2017graph, yan2024contrastive,kipf2016semi} methods have gained popularity, especially those based on graph neural networks (GNNs). GNNs use a message-passing mechanism to generate expressive representations of nodes by aggregating information along the edges. However, real-world edge relations between nodes are not limited to positive ties such as friendship, like, trust, and upregulation; they can also encompass negative ties like enmity,   
 dislike, mistrust, and downregulation, as shown in Figure \ref{fig:illustration}. For example, in social networks, users can be tagged as both `friends' and `foes' on platforms like Slashdot, a tech-related news website. In biological research, traits are influenced by gene expression regulation, which involves upregulation and downregulation. This scenario naturally lends itself to modeling as a {\em signed graph}, which includes both positive and negative edges. Nevertheless, the presence of negative edges complicates the standard message-passing mechanism, necessitating the development of new GNN models tailored to signed graphs — signed graph neural networks (SGNNs).

\begin{figure}[!t]
    \centering    \includegraphics[width=1.0\columnwidth]{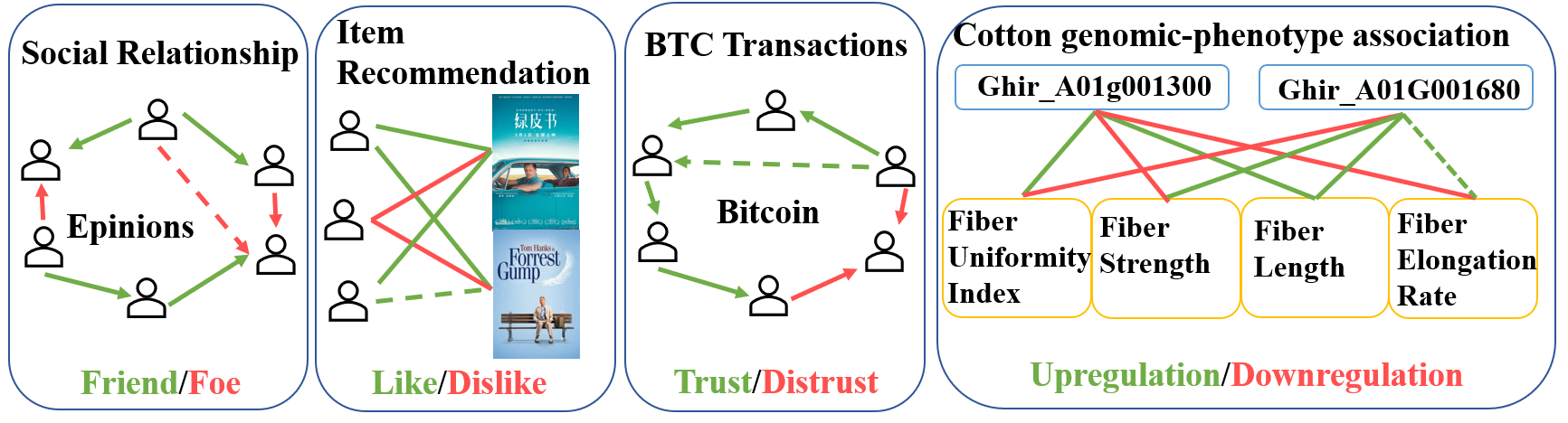}
    \caption{An illustration of signed graphs in real world.}
    \label{fig:illustration}
\end{figure}

While much effort has gone into developing new SGNN models \cite{derr2018signed,lee2020asine} for link sign prediction, research on their training methods is still lacking. Currently, SGNNs are trained by treating all edges equally and presenting them in random order. However, edges can have varying levels of learning difficulty. For example, Fig.\ref{fig:triad} shows four isomorphism types of undirected signed triangles. Intuitively, if node $v_i$ and node $v_j$ are connected by a positive edge, their positions in the embedding space should be made as close as possible, whereas if node $v_i$ and node $v_j$ are connected by a negative edge, their positions in the embedding space should be made as far apart as possible \cite{cygan2012sitting}. Nevertheless, in Fig.\ref{fig:triad}(c), node $v_i$ is connected to node $v_j$ by a negative edge, so in the embedding space, the distance between node $v_i$ and node $v_j$ should be as far as possible. However, node $v_i$ is connected to node $v_k$ and node $v_k$ is connected to node $v_j$, both with positive edges. Therefore, in the embedding space, node $v_i$ should be as close as possible to node $v_k$, and node $v_k$ should be as close as possible to node $v_j$. Consequently, the distance between node $v_i$ and node $v_j$ should be as close as possible. This contradiction makes it much harder for SGNNs to learn adequate  representations (see Def.\ref{def:adequate}) for these nodes from unbalanced triangles. To alleviate this situation, a direct approach is to reduce the impact of samples belonging to unbalanced cycles on the model. Extensive research has demonstrated that presenting training samples in a thoughtful sequence can yield substantial benefits for deep learning models \cite{bengio2009curriculum,wei2022clnode}. Specifically, initiating the training process with simpler examples and gradually introducing more complex ones has been shown to significantly enhance the models' performance. The methodology is recognized as \textit{Curriculum Learning}.

\begin{figure}[!t]
    \centering
    \includegraphics[width=0.8\columnwidth]{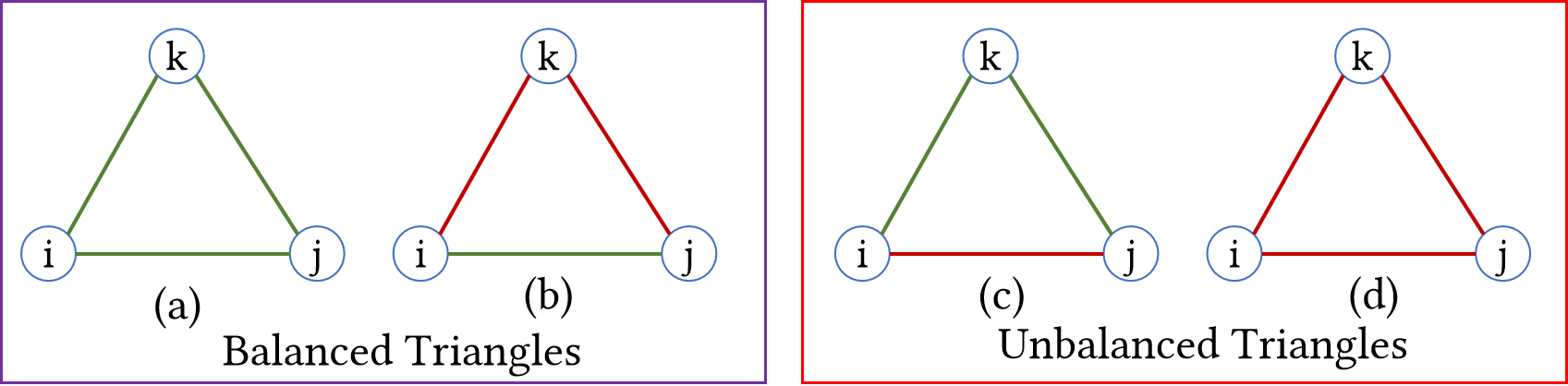}
    \caption{Balanced (unbalanced) triangles (3-cycles). Green and red lines represent positive and negative edges, resp.}
    \label{fig:triad}
\end{figure}

Curriculum learning is at the intersection between cognitive science and machine learning \cite{he2023boosting,fu2025gsscl}. Inspired by humans' learning habits, extensive research discovers that feeding the training samples in the ascending order of their hardness can benefit machine learning~\cite{xu2020curriculum}. Intuitively speaking, curriculum learning strategically mitigates the adverse effects of challenging or noisy samples during the initial training phases and gradually expose the model to increasingly complex and informative examples, thereby aiding the model in building a more robust and generalized understanding of the task at hand. CurGraph~\cite{wang2021curgraph} is the first to introduce curriculum learning to GNNs. However, it is designed for unsigned graph classifications. To the best of our knowledge, curriculum learning for SGNNs with link sign prediction as main downstream task remains unexplored.  

The main challenge when designing a curriculum learning method for SGNNs lies in how to evaluate the difficulty of training samples (i.e., edges). Graph-level classification models, e.g., CurGraph which claims the difficulty of samples (i.e., graphs) depends on the complexity of graphs (e.g., the number of nodes or edges in graphs). However, for the primary task of signed graph analysis, which is link sign prediction, the training samples (edges) are not independent, so it is not trivial to measure the difficulty of these samples. Alternative approaches frequently make use of label information \cite{wei2022clnode} and node features \cite{chu2021cuco} to differentiate between the levels of complexity in training samples. However, such data is absent in current real-world signed graphs. In this paper, we first theoretically analyze the learning difficulty of edges. We prove that current SGNNs cannot learn adequate representations for edges belonging to unbalanced cycles. Based on this conclusion, we design a lightweight difficulty assessment function to score the difficulty of edges. We encapsulate this idea in a new SGNN framework, called CSG (\underline{C}urriculum representation learning for \underline{S}igned \underline{G}raphs). CSG sorts the training samples by their difficulty scores and employs a training scheduler that continuously appends a set of harder training samples to the learner in each epoch. It is worth noting that postponing the training of hard samples will reduce the importance of hard examples in the training process \cite{wei2022clnode} but cannot enable SGNN models to surpass their current limitations, namely, learning adequate representations from unbalanced triangles. This facilitates SGNN models in learning more effective representations from easy edges, ultimately enhancing the overall predictive performance for both easy and hard edges see Table \ref{tab:easy}.

 To evaluate the effectiveness of CSG, we perform extensive experiments on six real-world datasets. We verify that our proposed method CSG can improve the link sign prediction performance of the backbone models by up to 23.7\% (SGCN \cite{derr2018signed} model, Slashdot dataset) in terms of AUC and can enhance the stability of models, achieving a standard deviation reduction of up to 8.4\% on AUC (SGCN \cite{derr2018signed} model, WikiRfa) (see Table \ref{tab:auc}). In addition, we also verify that on more incomplete graphs, say 40\% - 70\% training ratio, CSG can still enhance the performance of backbone models (see Table \ref{tab:incomplete}). These experimental results demonstrate the effectiveness of CSG. One limitation of our method is that we only consider unbalanced triangles (3-cycle). This is due to the high sparsity commonly observed in the current real-world signed graph datasets (see Table \ref{tab:datasets}). Therefore, the number of unbalanced 4-cycles, 5-cycles, and even 6-cycles is relatively much smaller (see Table \ref{tab:n-cycle}). To ensure algorithmic simplicity and efficiency, this paper only consider 3-cycles (i.e., triangles). Overall, our contributions are summarized as follows:


\begin{itemize}
    \item We are pioneering research in the field of training methods for signed graph neural networks (SGNNs).
    \item We implement curriculum learning in the training of SGNNs. Our work involves a thorough theoretical analysis of the inherent limitations within current SGNNs. Utilizing these insights, we introduce a lightweight difficulty assessment tool capable of assigning complexity scores to samples (e.g., edges).
    \item We introduce an innovative curriculum representation learning approach tailored for signed graphs (CSG).
    \item We evaluate CSG on six real-world signed graph datasets using five backbone SGNN models. The results highlight CSG's effectiveness as a curriculum learning framework, improving both accuracy and stability across these models.
\end{itemize}

\section{Related Work}

\subsection{Signed Graph Representation Learning}
Due to social media's popularity, signed graphs are now widespread, drawing significant attention to network representation \cite{zhang2023contrastive,zhang2023rsgnn, he2024signed,lin2022status,ko2023spectral}. Existing research mainly focuses on \textit{link sign prediction}, despite other tasks like node classification \cite{tang2016node}, node ranking \cite{jung2016personalized}, community detection \cite{bonchi2019discovering} and genomic-phenotype association prediction \cite{pan2024csgdn}. Some signed graph embedding methods, such as SNE \cite{yuan2017sne}, SIDE \cite{kim2018side}, SGDN \cite{jung2020signed} are based on random walks and linear probabilistic methods. In recent years, neural networks have been applied to signed graph representation learning. SGCN \cite{derr2018signed}, the first SGNN that generalizes GCN \cite{kipf2016semi} to signed graphs, uses balance theory to determine the positive and negative relationships between nodes that are multi-hop apart. Another important GCN-based work is GS-GNN \cite{liu2021signed} which alleviates the assumption of balance theory and generally assumes nodes can be divided into multiple groups. In addition, other main SGNN models such as SiGAT \cite{huang2019signed}, SNEA \cite{li2020learning}, SDGNN \cite{huang2021sdgnn}, and SGCL \cite{shu2021sgcl} are based on GAT \cite{velivckovic2017graph}. These works mainly focus on developing more advanced SGNN models. Our work is orthogonal to these works in that we propose a new training strategy to enhance SGNNs by learning from an easy-to-difficult curriculum.  

\subsection{Curriculum Learning}
Curriculum Learning proposes the idea of training models in an easy-to-difficult fashion inspired by cognitive science \cite{rohde1999language}, which implies that one can improve the performance of machine learning models by feeding the training samples from easy to difficult. In recent years, Curriculum Learning has been employed in Computer Vision \cite{basu2022surpassing} and Natural Language Processing \cite{agrawal2022imitation}, which commonly follow the similar steps, i.e., 1) evaluating the difficulty of training samples, 2) schedule the training process based on the difficulties of training samples. CurGraph \cite{wang2021curgraph} is the first to develop a curriculum learning approach for graph classification, which uses the infograph method to obtain graph embeddings and a neural density estimator to model embedding distributions which is used to calculate the difficulty scores of graphs based on the intra-class and inter-class distributions of their embeddings. CLNode \cite{wei2022clnode} applies curriculum learning to node classification. CuCo \cite{chu2021cuco} applies curriculum learning to graph contrastive learning, which can adjust the training order of negative samples from easy to hard. In general, curriculum learning study for GNNs is still in its infants. To the best of our knowledge, there is no attempt to apply curriculum learning to Signed Graph Neural Networks. One essential challenge in designing the curriculum learning method is how to measure the difficulty of training samples (i.e., edges). The aforementioned methods often utilize label information \cite{wei2022clnode} and node feature \cite{chu2021cuco} to distinguish the difficulty levels of training samples. \textit{Such information is not available in existing real-world signed graphs}. Assessing the difficulty of training samples from signed graph structures remains an open question.

\section{Notations}
A  {\em signed graph} is $\mathcal{G}=(\mathcal{V}, \mathcal{E})$ where $\mathcal{V}=\{v_1, \cdots, v_{|\mathcal{V}|} \}$ denotes the set of nodes and $\mathcal{E} = \mathcal{E}^+ \cup \mathcal{E}^-$ denote the set of edges with positive and negative signs. The sign graph can  be represented by a signed \textit{adjacency matrix} $A \in \mathbb{R}^{|\mathcal{V}|\times |\mathcal{V}|}$ with entry $A_{ij} >0$ (or $<0$) if a positive (or negative) edge between node $v_i$ and $v_j$, and $A_{ij} =0$ denotes no edge between $v_i$ and $v_j$. Note that in real-world signed graph datasets, nodes usually lack features, unlike unsigned graph dataset which typically contains a feature vector $x_i$ for each node $v_i$. $\mathcal{N}_{i}^{+}=\{v_j\mid A_{ij}>0\}$ and $\mathcal{N}_{i}^{-}=\{v_j\mid A_{ij}<0\}$ denote the \textit{positive} and \textit{negative} neighbors of node $v_i$. $\mathcal{N}_{i} =  \mathcal{N}^+_{i} \cup \mathcal{N}^-_{i}$ denotes the neighbor set of node $v_i$. $\mathcal{O}_n$ denotes the set of n-cycles in the signed graph, e.g.,  $\mathcal{O}_3$ represents the set of {\em triangles} (3-cycles) in the signed graph. $\mathcal{O}_n^{+}(\mathcal{O}_n^{-})$ denotes the balanced (unbalanced) n-cycle set. A balanced (unbalanced) n-cycle with $n$ nodes has an even (odd) number of negative edges, e.g., A 4-cycle $\{v_i,v_j,v_k,v_l\}$ is called {\em balanced} ({\em unbalanced}) if $A_{ij}A_{jk}A_{ik}A_{kl}>0$ ($A_{ij}A_{jk}A_{ik}A_{kl}<0$). The main notations are shown in Table \ref{tab:notations}.

\begin{table}[]
\scriptsize
\centering
\caption{Key notations utilized in the paper}
\begin{tabular}{ll}
\hline
Notations                                   & Descriptions                          \\ \hline
$\mathcal{G}$                               & An undirected Signed Graph            \\
$\mathcal{V}$                               & Node set                              \\
$\mathcal{E}$                               & Edge set                              \\
$A$                                         & Adjacency matrix of $\mathcal{G}$     \\
$H$                                         & Output embedding matrix of nodes      \\
$\mathcal{E}^{+}(\mathcal{E}^{-})$          & Positive (negative) edge set          \\
$\mathcal{N}_{i}$                           & Neighbors of node $v_i$               \\
$\mathcal{N}_{i}^{+}(\mathcal{N}_{i}^{-})$  & Positive (negative) neighbors of node $v_i$ \\
$\mathcal{O}_n$                             & n-cycle set                               \\
$\mathcal{O}_n^{+}(\mathcal{O}_n^{-})$      & Balanced (unbalanced) n-cycle set \\ \hline
\end{tabular}
\label{tab:notations}
\end{table}

The most general assumption for signed graph embedding suggests a node should bear a higher resemblance with those neighbors who are connected with positive edges than those who are connected with negative edges (from extended structural balance theory \cite{cygan2012sitting}). The primary objective of an SGNN is to acquire an \textit{embedding function} $f_{\theta}\colon \mathcal{V} \rightarrow H$ that maps nodes within a signed graph onto a latent vector space $H$. This function aims to ensure that $f_{\theta}(v_i)$ and $f_{\theta}(v_j)$ are proximate if $e_{ij}\in \mathcal{E}^+$, and distant if $e_{ij}\in \mathcal{E}^-$. Moreover, we choose \textit{link sign prediction} as the downstream task of SGNN. This task is to infer the sign of $e_{ij}$ (i.e., $A_{ij}$) when provided with nodes $v_i$ and $v_j$ \cite{leskovec2010predicting}.

\section{Methodology}
In this section, we describe our CSG framework as shown in Figure \ref{fig:framework}. First, Through extensive theoretical analysis, we establish that SGNNs struggle to learn adequate representations for edges in \textit{unbalanced cycles}, making these edges a significant challenge for the model. As a result, we create a lightweight difficulty measurer function where edges belonging to unbalanced triangles will be assigned higher difficult scores. Subsequently, we introduce a training scheduler to initially train models using `easy' edges and gradually incorporated `hard' edges into the training process. It is worth highlighting that the curriculum learning approach does not facilitate SGNN models in learning adequate representations for `hard' edges. It solely downplay the training weight of the `hard' samples by not presenting them in the early training process.

\begin{figure*}
    \centering
    \includegraphics[width=1.0\columnwidth]{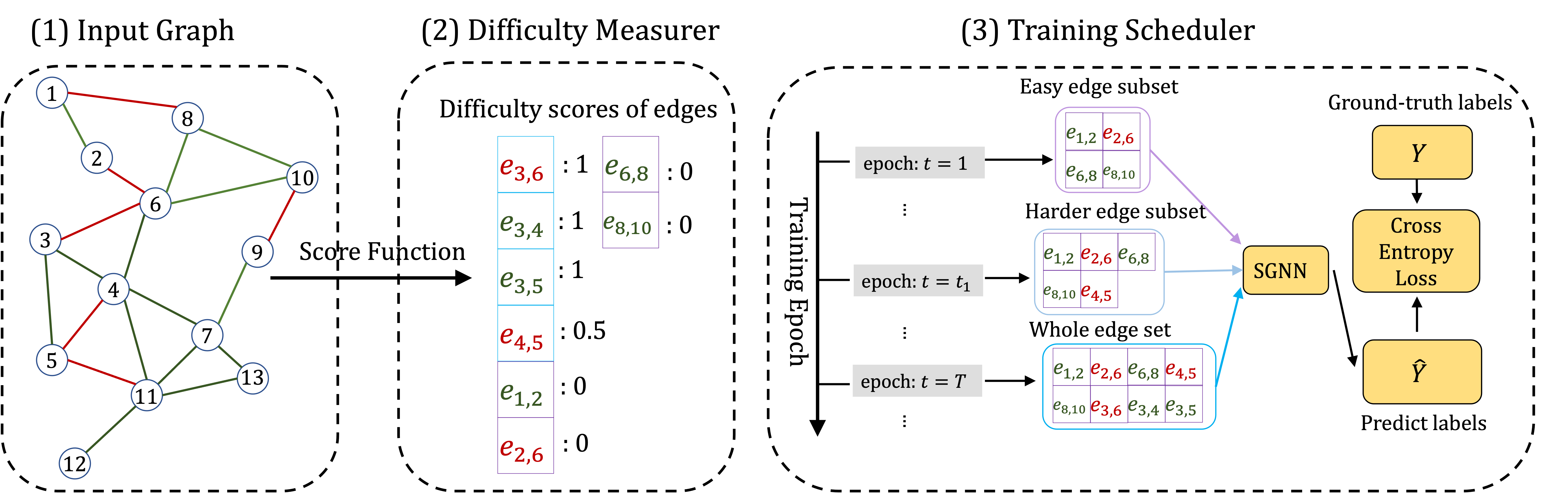}
    \caption{Overall framework of the proposed CSG. (1) input signed graph where green and red lines represent positive and negative edges, resp. (2) triangle-based difficulty measurer function where edges belonging to unbalanced triangles will be assigned higher difficult scores. (3) training scheduler where samples (edges) will be used to feed into the backbone SGNN models according to an easy-to-difficult curriculum.}
    \label{fig:framework}
\end{figure*}

\subsection{Theoretical Analysis}\label{sec:theory}
The key challenge in designing a curriculum learning-based training method for SGNNs is effectively identifying the difficulty of training samples (i.e., edges). We prove that learning adequate representations from unbalanced cycles poses a significant challenge for SGNNs. We first give the definition of \textit{Adequate representations of nodes}.
\begin{definition}[Adequate representations of nodes]\label{def:adequate}
Given a signed graph $\mathcal{G} = (\mathcal{V}, \mathcal{E})$, a SGNN model $f_{\theta}\colon \mathcal{V}\rightarrow H$ and any non-negative distance metric $dist\colon H\times H\rightarrow \mathbb{R}^+$, we call $H_i=f_{\theta}(v_i)$ an adequate representation of any node $v_i\in\mathcal{V}$ if the following conditions all satisfy:
\begin{itemize}
    \item[(a)] There exist $\epsilon>0$ such that for any $v_j \in \mathcal{N}_i^{-}$ and $H_j=f_{\theta}(v_j)$, $dist(H_i, H_j)>\epsilon$;
    \item[(b)] For any $v_j\in \mathcal{N}_i^{+}$, $v_k\in \mathcal{N}_i^{-}$ and $H_j=f_{\theta}(v_j)$, $H_k=f_{\theta}(v_k)$, $dist(H_i, H_j)< dist(H_i, H_k)$,
\end{itemize}
\end{definition}
An intuitive interpretation of Definition \ref{def:adequate} is that nodes linked by negative edges should be distant in embedding space, exceeding a certain positive threshold $\epsilon$ (Condition a), while nodes linked by positive edges should be closer in embedding space than those linked by negative edges (Condition b). We define \textit{Adequate representations of edges} based on this. 
\begin{definition}[Adequate representations of edges]\label{def:adequate_edges}
Given a node set $\mathcal{V}_{imp.} \in \mathcal{V}$ (imp. refers to improper) contains nodes with improper representations. The representation of edge $e_{ij}$ is $H_{ij} = [H_i,H_j]$, where $[,]$ is concatenation operator. We call $H_{ij}$ an adequate representation of any edge $e_{ij}$, if $v_i \notin \mathcal{V}_{imp.}$ \&\& $v_j \notin \mathcal{V}_{imp.}$
\end{definition}
We now give a concise introduction to the aggregation mechanism for SGNNs. Essentially, mainstream SGNN models such as SGCN \cite{derr2018signed} and SNEA \cite{li2020learning} adopt the following mechanism. 

The node $v_i$'s representation at layer $\ell$ is defined as:
\[
\scriptsize
h_i^{(\ell)} = [ h_i^{pos(\ell)}, h_i^{neg(\ell)} ]
\]
where $h_i^{pos(\ell)}$ and $ h_i^{neg(\ell)}$ respectively denote the positive and negative representation vectors of node $v_i \in \mathcal{V}$ at the $\ell$th layer and $[,]$ denotes a concatenation operation. The updating process at layer $\ell=1$ is written as:
\begin{equation}
\label{eq:SGNN_1}
\scriptsize
\begin{aligned}
    a_i^{pos(1)} &= \textrm{AGGREGATE}^{(1)} \left( \left\{h_j^{(0)}: v_j \in \mathcal{N}_i^{+} \right\} \right), 
    h_i^{pos(1)} &= \textrm{COMBINE}^{(1)}\left(h_i^{(0)},a_i^{pos(\ell)} \right) \\
    a_i^{neg(1)} &= \textrm{AGGREGATE}^{(1)} \left( \left\{h_j^{(0)}: v_j \in \mathcal{N}_i^-\right\} \right), 
    h_i^{neg(1)} &= \textrm{COMBINE}^{(1)}\left(h_i^{(0)},a_i^{neg(\ell)} \right) \\
\end{aligned}
\end{equation}
And for  $\ell>1$ layer, we have:
\begin{equation}
\label{eq:SGNN_2}
\scriptsize
\begin{aligned}
    a_i^{pos(\ell)} &= \textrm{AGGREGATE}^{(\ell)} \Biggl( \left\{h_j^{pos(\ell-1)}: v_j \in \mathcal{N}_i^+ \right\},
    &\left\{h_j^{neg(\ell-1)}: v_j \in \mathcal{N}_i^- \right\} \Biggl) \\
    h_i^{pos(\ell)} &= \textrm{COMBINE}^{(\ell)}\left(h_i^{pos(\ell-1)},a_i^{pos(\ell)} \right) \\
    a_i^{neg(\ell)} &= \textrm{AGGREGATE}^{(\ell)} \Biggl( \left\{h_j^{neg(\ell-1)}: v_j \in \mathcal{N}_i^+ \right\}, 
    &\left\{h_j^{pos(\ell-1)}: v_j \in \mathcal{N}_i^- \right\} \Biggl)\\
    h_i^{neg(\ell)} &= \textrm{COMBINE}^{(\ell)}\left(h_i^{neg(\ell-1)},a_i^{neg(\ell)} \right), \\
\end{aligned}
\end{equation}

Unlike GNNs, SGNNs handle positive and negative edges using a two-part representation and a more intricate aggregation scheme. For instance, when $\ell>1$, the positive part of the representation for node $v_i$ may aggregate information from the positive representation of its positive neighbors and the negative representation of its negative neighbors. In the upcoming discussion, we'll show that nodes in signed graphs with \textit{isomorphic ego trees} will have shared representations, building on SGNN's message-passing mechanism.



\begin{definition}[Signed graph isomorphism]\label{Signed Graph Isomorphism}
Two signed graphs $\mathcal{G}_1$ and $\mathcal{G}_2$ are {\em isomorphic}, denoted by $\mathcal{G}_1 \cong \mathcal{G}_2$, if there exists a bijection $\psi\colon \mathcal{V}_{\mathcal{G}_1} \rightarrow \mathcal{V}_{\mathcal{G}_2}$ such that, for every pair of vertices $v_i, v_j \in \mathcal{V}_{\mathcal{G}_1}$, $e_{ij} \in \mathcal{E}_1$, if and only if $e_{\psi(v_{i}),\psi(v_j)} \in \mathcal{E}_2$ and $\sigma(e_{ij}) = \sigma(e_{\psi(v_{i}),\psi(v_j)})$.
\end{definition}

We further define a node's balanced and unbalanced reach set, following similar notions in \cite{derr2018signed}.
\begin{definition}[Balanced / Unbalanced reach set]
\label{Relation node set}
For a node $v_i$, its {\em $\ell$-balanced (unbalanced) reach set $\mathcal{B}_i(\ell)$ ($\mathcal{U}_i(\ell))$) } is defined as a set of nodes with even (odd) negative edges along a path that connects $v_i$, where $\ell$ refers to the length of this path. 
The balanced (unbalanced) reach set extends positive (negative) neighbors from one-hop to multi-hop paths. In particular, the {\em balanced reach set} $\mathcal{B}_i(\ell)$ and the {\em unbalanced reach set} $\mathcal{U}_i(\ell))$ of a node $v_i$ with path length $\ell =1$ are defined as:
\begin{equation}
\scriptsize
\begin{aligned}
&\mathcal{B}_{i}(\ell)=\left\{v_{j} \mid v_{j} \in \mathcal{N}_{i}^{+}\right\} , 
&\mathcal{U}_{i}(\ell)=\left\{v_{j} \mid v_{j} \in \mathcal{N}_{i}^{-}\right\}
\end{aligned}
\end{equation}
For the path length $\ell >1$:
\begin{equation}
\scriptsize
\begin{aligned}
\mathcal{B}_{i}(\ell)=&\left\{v_{j} \mid v_{k} \in \mathcal{B}_{i}(\ell-1) \text { and } v_{j} \in \mathcal{N}_{k}^{+}\right\}
 \cup &\left\{v_{j} \mid v_{k} \in \mathcal{U}_{i}(\ell-1) \text { and } v_{j} \in \mathcal{N}_{k}^{-}\right\} \\
\mathcal{U}_{i}(\ell)=&\left\{v_{j} \mid v_{k} \in \mathcal{U}_{i}(\ell-1) \text { and } v_{j} \in \mathcal{N}_{k}^{+}\right\} 
 \cup &\left\{v_{j} \mid v_{k} \in \mathcal{B}_{i}(\ell-1) \text { and } v_{j} \in \mathcal{N}_{k}^{-}\right\}.
\end{aligned}
\end{equation}
\end{definition}

{\em Weisfeiler-Lehman (WL) graph isomorphism test} \cite{weisfeiler1968reduction} is a powerful tool to check if two unsigned graphs are isomorphic. A WL test recursively collects the connectivity information of the graph and maps it to the feature space. If two graphs are isomorphic, they will be mapped to the same element in the feature space. A multiset generalizes a set by allowing multiple instances for its elements. During the WL-test, a multiset is used to aggregate labels from neighbors of a node in an unsigned graph. More precisely, for a node $v_i$, in the $\ell$-th iteration, the node feature is the collection of node neighbors $\big \{ \left(X_i^{(\ell)},\{X_j^{(\ell)}\colon v_j \in \mathcal{N}_i \} \right) \big\}$ where $X_{i}^{(\ell)}$ denotes the feature (label) of node $v_i$.

We now extend WL test to signed graph: For a node $v_i$ in a signed graph, we use two multisets to aggregate information from $v_i$'s balanced reach set and unbalanced reach set separately. In this way, each node in a signed graph has two features $X_{i}(\mathcal{B})$ and $X_{i}(\mathcal{U})$ aside from the initial features.





\begin{definition}[Extended WL-test For Signed Graph]\label{Extended Wl-test}
Based on the message-passing mechanism of SGNNs, the process of extended WL-test for the signed graph can be defined as below. For the first-iteration update, i.e. $\ell=1$, the {\em WL node label} of a node $v_i$ is $(X_i^1 (\mathcal{B}),X_i^1(\mathcal{U}))$ where:
\begin{equation}
\scriptsize
\begin{aligned}
    X_{i}^{(1)}(\mathcal{B}) & = \varphi \bigg( \Big\{ \left( X_{i}^{(0)}, \{ X_{j}^{(0)}: v_j \in \mathcal{N}_{i}^{+} \} \right)  \Big\}   \bigg),  
    X_{i}^{(1)}(\mathcal{U}) & = \varphi \bigg( \Big\{ \left( X_{i}^{(0)}, \{ X_{j}^{(0)}: v_j \in \mathcal{N}_{i}^{-} \} \right)  \Big\}   \bigg)
\end{aligned}
\end{equation}
For $\ell>1$, the {\em WL node label} of $v_i$ is $(X_i^{(\ell)} (\mathcal{B}), X_i^{(\ell)}(\mathcal{U}))$ where:
\begin{equation}\label{signed label aggregation}\scriptsize
\begin{aligned}
    X_{i}^{(\ell)}(\mathcal{B}) & = \varphi \bigg( \Big\{ \bigl( X_{i}^{(\ell-1)}(\mathcal{B}), \{ X_{j}^{(\ell-1)}(\mathcal{B}): v_j \in \mathcal{N}_{i}^{+}\}, 
    & \{ X_{j}^{(\ell-1)}(\mathcal{U}): v_j \in \mathcal{N}_{i}^{-} \} \bigl)  \Big\}   \bigg) \\
    X_{i}^{(\ell)}(\mathcal{U}) & = \varphi \bigg( \Big\{ \bigl( X_{i}^{(\ell-1)}(\mathcal{U}), \{ X_{j}^{(\ell-1)}(\mathcal{U}): v_j \in \mathcal{N}_{i}^{+}\}, 
    & \{ X_{j}^{(\ell-1)}(\mathcal{B}): v_j \in \mathcal{N}_{i}^{-} \} \bigl)  \Big\}   \bigg) 
\end{aligned}
\end{equation}
where $\varphi$ is an injective function. 
\end{definition}

The extended WL-test above is defined with a similar aggregation and update process as a SGNN and thus can be used to capture the expressibility of the SGNN.

\begin{definition}
 A {\em (rooted) $k$-hop ego-tree} is a tree built from a root node $v_i$ (level-0) in $\mathcal{G}$ inductively for $k$ levels: From any node $v_j$ at level $\ell\geq 0$, create a copy of each neighbor $v_p\in \mathcal{N}_j$ at level $\ell+1$ and connect $v_j$ and $v_p$ with a new tree edge whose sign is $\sigma(e_{j,p})$.   
 \end{definition}

\begin{definition}[ego-tree isomorphism]\label{def:Signed Graph Isomorphism}
Two signed ego-tree $\tau_1$ and $\tau_2$ are considered isomorphic, denoted by $\tau_1 \cong \tau_2$, if there exists a bijective mapping $\psi\colon \mathcal{V}_{\tau_1} \rightarrow \mathcal{V}_{\tau_2}$ such that for every pair of vertices $v_i, v_j \in \mathcal{V}_{\tau_1}$, an edge $e_{ij} \in \mathcal{E}_{\tau_1}$ exists if and only if $e_{\psi(i),\psi(j)} \in \mathcal{E}_{\tau_2}$, and the sign of the corresponding edge satisfy $A_{ij} = A_{\psi(i),\psi(j)}$.
\end{definition}

\begin{theorem}\label{isomophic_egotree}
Suppose two ego-trees $\tau_1$ and $\tau_2$ are  isomorphic. An SGNN $\mathcal{A}$ applied to $\tau_1$ and $\tau_2$ will produce the same node embedding for the roots of these ego-trees.  
\end{theorem}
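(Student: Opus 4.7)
The plan is to use structural induction on the layer index $\ell$, leveraging the bijection $\psi\colon \mathcal{V}_{\tau_1}\rightarrow \mathcal{V}_{\tau_2}$ guaranteed by the assumed ego-tree isomorphism $\tau_1\cong\tau_2$. The goal is to establish the stronger claim: for every level-$\ell$ node $v$ in $\tau_1$ that is at tree-depth at most $k-\ell$ from the root, its layer-$\ell$ SGNN representation equals that of $\psi(v)$ in $\tau_2$. Applying this at $\ell=k$ (or more generally at the final layer) to the roots yields the theorem.

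First, I would settle the base case $\ell=0$. Since real-world signed graphs typically lack node features (as noted in the Notations section), the initial embedding $h_v^{(0)}$ is a fixed default (e.g., an all-ones vector, a learned role-agnostic initialization, or a spectral embedding computed from local structure) that depends only on the node's structural role; by ego-tree isomorphism this value coincides for $v$ and $\psi(v)$. (If one allows feature-based initialization, the isomorphism in Def.~\ref{def:Signed Graph Isomorphism} should be taken to respect features, so the base case still holds.)

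Next, I would carry out the inductive step. Assume corresponding nodes match through layer $\ell-1$. By Def.~\ref{def:Signed Graph Isomorphism}, the bijection $\psi$ preserves both adjacency and edge sign, so it restricts to bijections $\mathcal{N}_v^{+}\to \mathcal{N}_{\psi(v)}^{+}$ and $\mathcal{N}_v^{-}\to \mathcal{N}_{\psi(v)}^{-}$. By the inductive hypothesis, the multisets
\[
\{h_u^{pos(\ell-1)}\colon u\in \mathcal{N}_v^{+}\}\quad\text{and}\quad \{h_u^{neg(\ell-1)}\colon u\in \mathcal{N}_v^{-}\}
\]
appearing in the positive aggregator of Eq.~\ref{eq:SGNN_2} for $v$ coincide with the analogous multisets for $\psi(v)$, and symmetrically for the negative aggregator. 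Since $\textrm{AGGREGATE}^{(\ell)}$ acts on multisets and $\textrm{COMBINE}^{(\ell)}$ is a deterministic function of $(h_v^{pos(\ell-1)},a_v^{pos(\ell)})$, we obtain $h_v^{pos(\ell)}=h_{\psi(v)}^{pos(\ell)}$ and $h_v^{neg(\ell)}=h_{\psi(v)}^{neg(\ell)}$, hence $h_v^{(\ell)}=h_{\psi(v)}^{(\ell)}$. The layer-1 step uses Eq.~\ref{eq:SGNN_1} in the same way.

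The main obstacle is a bookkeeping one rather than a conceptual one: the two-part $(pos,neg)$ representation and the cross-aggregation in Eq.~\ref{eq:SGNN_2} (positive part of $v$ absorbing negative parts of its negative neighbors, and vice versa) require one to simultaneously track both components through the induction. An elegant way to handle this is to phrase the induction in terms of the extended WL labels $(X_v^{(\ell)}(\mathcal{B}),X_v^{(\ell)}(\mathcal{U}))$ of Def.~\ref{Extended Wl-test}: isomorphic ego-trees yield identical WL labels at every level by a direct induction using the injective $\varphi$, and since Eq.~\ref{eq:SGNN_1}--\ref{eq:SGNN_2} instantiate exactly the same aggregation pattern, any SGNN $\mathcal{A}$ is a refinement of (i.e., no more discriminative than) the extended WL-test. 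Thus equal WL labels at the root force equal SGNN embeddings at the root, completing the proof.
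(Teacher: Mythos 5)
Your proposal is correct and follows essentially the same route as the paper: both arguments show layer by layer that the isomorphism $\psi$ preserves the multisets fed to the aggregation, and both invoke the extended WL-test of Def.~\ref{Extended Wl-test} (with its injective $\varphi$) as the bridge to conclude that equal WL labels force equal SGNN embeddings at the roots. The only differences are cosmetic — you run a direct induction where the paper dresses the same induction up as a proof by contradiction, and you are somewhat more explicit about the base case and the $(pos,neg)$ cross-aggregation bookkeeping.
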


\begin{proof}
\label{prf:isomophic_egotree}
Suppose ego-tree $\tau_1$ and $\tau_2$ are two isomorphic signed graphs. After $\ell$ iterations, we have $\mathcal{A}(root(\tau_1)) \neq \mathcal{A}(root(\tau_2))$, where $root(\tau)$ represents the root of $\tau$. As $\tau_1$ and $\tau_2$ are isomorphic, they have the same extended WL node labels for iteration $\ell$ for any $\ell = 0,\dots,k-1$, i.e., $X_1^{(\ell)}(\mathcal{B}) = X_2^{(\ell)}(\mathcal{B})$ and $X_1^{(\ell)}(\mathcal{U}) = X_2^{(\ell)}(\mathcal{U})$, as well as the same collection of neighbor labels, i.e., \vspace*{-0.1cm}
\begin{equation}
    \scriptsize
    \begin{aligned}
    \bigg(X_1^{(\ell)}&(\mathcal{B}),\{X_j^{(\ell)}(\mathcal{B}):v_j \in \mathcal{N}_1^{+} \}, \{X_1^{(\ell)}(\mathcal{U}): v_j \in \mathcal{N}_1^{-} \} \bigg) = \\
    &\bigg(X_2^{(\ell)}(\mathcal{B}),\{X_j^{(\ell)}(\mathcal{B}):v_j \in \mathcal{N}_2^{+} \}, \{X_2^{(\ell)}(\mathcal{U}): v_j \in \mathcal{N}_2^{-} \} \bigg) \\
    \bigg(X_1^{(\ell)}&(\mathcal{U}),\{X_j^{(\ell)}(\mathcal{U}):v_j \in \mathcal{N}_1^{+} \}, \{X_1^{(\ell)}(\mathcal{B}): v_j \in \mathcal{N}_1^{-} \} \bigg) = \\
    &\bigg(X_2^{(\ell)}(\mathcal{U}),\{X_j^{(\ell)}(\mathcal{U}):v_j \in \mathcal{N}_2^{+} \}, \{X_2^{(\ell)}(\mathcal{B}): v_j \in \mathcal{N}_2^{-} \} \bigg)
    \end{aligned}
\end{equation}
Otherwise, the extended WL test should have obtained different node labels for $\tau_1$ and $\tau_2$ at iteration $\ell+1$. As the $\psi$ is an injective function, the extended WL test always relabels different extended multisets into different labels. As the SGNN and extended WL test follow the similar aggregation and rebel process, if $X_1^{(\ell)} = X_2^{(\ell)}$, we can have $h_1^{(\ell)} = h_2^{(\ell)}$. Thus, $\mathcal{A}(root(\tau_1)) = \mathcal{A}(root(\tau_2))$, we have reached a contradiction.
\end{proof}

\begin{figure*}
    \centering    \includegraphics[width=1.0\columnwidth]{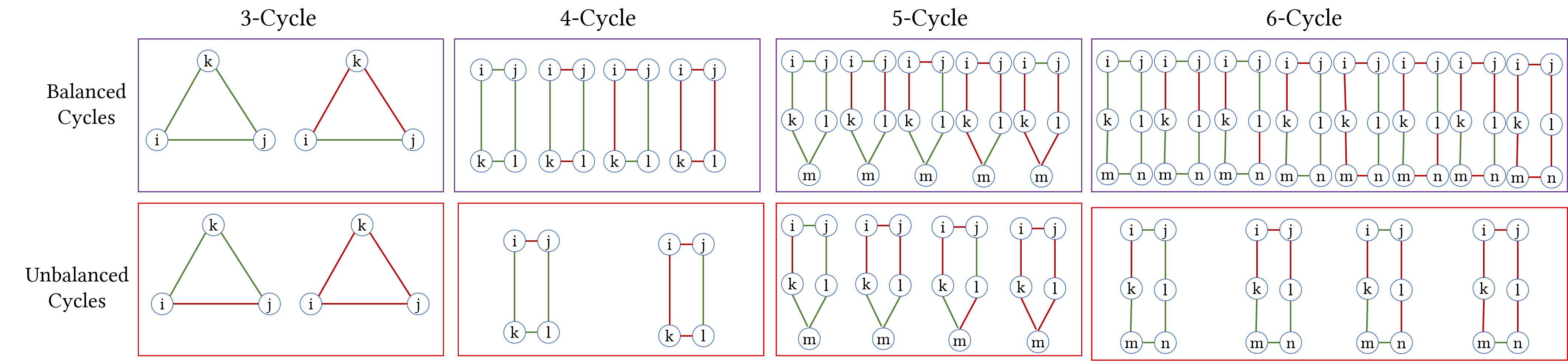}
    \caption{Isomorphism types of balanced (unbalanced) 3-cycle, 4-cycle, 5-cycle and 6-cycle. Green and red lines represent + and - edges, resp.}
    \label{fig:cycles}
\end{figure*}

We now turn our attention to cycles. According to small-world experiment \footnote{\url{https://en.wikipedia.org/wiki/Six_degrees_of_separation\#cite_note-1}}, we only consider cycles with a maximum of 6 nodes. Fig.\ref{fig:cycles} shows isomorphism types of balanced (unbalanced) 3-cycle, 4-cycle, 5-cycle and 6-cycle.

\begin{theorem}\label{SGNN_limitation}
An SGNN cannot learn adequate representations for edges from unbalanced cycles.
\end{theorem}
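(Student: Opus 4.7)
The plan is to show that for any edge $e_{ij}$ lying on an unbalanced cycle $C$, any SGNN $\mathcal{A}$ must place at least one endpoint of $e_{ij}$ into $\mathcal{V}_{imp.}$, so by Def.~\ref{def:adequate_edges} the edge cannot receive an adequate representation. The central tool is Theorem~\ref{isomophic_egotree}: if two $k$-hop ego-trees are isomorphic, then $\mathcal{A}$ must assign identical embeddings to their roots. Thus if I can exhibit two distinct nodes $v_i,v_j$ connected by a \emph{negative} edge whose rooted ego-trees are isomorphic, condition (a) of Def.~\ref{def:adequate} is immediately violated, since $dist(H_i,H_j)=dist(H_i,H_i)=0$ cannot exceed the required positive threshold $\epsilon$.

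\textbf{Case analysis over unbalanced isomorphism types.} I would enumerate the unbalanced isomorphism types shown in Fig.~\ref{fig:cycles} for $n\in\{3,4,5,6\}$ and, in each, exhibit an explicit cycle automorphism that reflects the cycle across a distinguished negative edge $e_{ij}$ and carries signs to signs. For the all-negative triangle, any $120^\circ$ rotation of the cycle gives such an automorphism, so every pair of roots has isomorphic ego-trees. For the one-negative triangle of Fig.~\ref{fig:triad}(c), the reflection that fixes the apex $v_k$ and swaps the two endpoints $v_i,v_j$ of the negative edge sends the multiset of balanced/unbalanced reach sets at each level of the ego-tree of $v_i$ bijectively to that of $v_j$, which via Def.~\ref{Extended Wl-test} (and then Theorem~\ref{isomophic_egotree}) forces $H_i=H_j$. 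The same reflection-about-a-negative-edge argument works for each unbalanced isomorphism type at $n=4,5,6$, because an unbalanced cycle always contains an odd number of negative edges, at least one of which admits a sign-preserving reflection of the cycle swapping its endpoints.

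\textbf{Propagating the contradiction to every edge of the cycle.} Once $H_i=H_j$ is established for one negative edge $e_{ij}\in C$, condition (a) of Def.~\ref{def:adequate} is violated at both $v_i$ and $v_j$, so $\{v_i,v_j\}\subseteq\mathcal{V}_{imp.}$ and $e_{ij}$ is not adequately represented. To extend the conclusion to every edge of $C$, I repeat the same reflection argument, re-rooting at each edge of $C$ in turn. The key observation is that an unbalanced $n$-cycle always has enough sign-preserving reflections so that every node of $C$ lies in some reflection-swapped pair joined by a negative edge, placing every cycle vertex in $\mathcal{V}_{imp.}$; since every edge of $C$ has at least one endpoint in the cycle, Def.~\ref{def:adequate_edges} then rules out adequate representations for all edges of $C$.

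\textbf{Main obstacle.} The hardest step is justifying the ego-tree isomorphism when $C$ sits inside a larger graph: the reflection that symmetrizes the ``in-cycle'' ego-trees is not in general a symmetry of the whole graph, because the two roots may have different off-cycle neighbors. I would handle this by reducing to the worst case in which the claim must already hold on the cycle in isolation, using the injectivity of the extended WL relabeling in Def.~\ref{Extended Wl-test} to show that whenever the ego-trees agree as labeled rooted trees the SGNN outputs must agree as well; any additional off-cycle structure can only add information symmetrically under the reflection, or else be absorbed by treating the cycle as a minimal witness. A minor subtlety is that $dist$ is only assumed non-negative rather than a true metric, but this suffices, since the argument needs only $dist(H_i,H_i)=0<\epsilon$ rather than any triangle-inequality reasoning.
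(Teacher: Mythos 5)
Your core mechanism is the same as the paper's: root ego-trees at the cycle's vertices, invoke Theorem~\ref{isomophic_egotree} to force identical embeddings for two nodes joined by a negative edge, contradict Definition~\ref{def:adequate} (the paper uses the weaker conclusion $dist(H_i,H_j)\leq dist(H_i,H_k)$ against condition~(b) as well as condition~(a)), and then pass to edges via Definition~\ref{def:adequate_edges}. Where you diverge is in scope: the paper exhibits \emph{one} representative unbalanced configuration for each of $n=3,4,5,6$, works only with $2$-hop ego-trees (explicitly justified by the two-layer depth of typical SGNNs), and reads the isomorphism off the drawn trees, whereas you try to cover \emph{every} unbalanced isomorphism type by a uniform reflection argument.

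That uniform argument has a genuine gap: your ``key observation'' --- that an unbalanced cycle always contains a negative edge admitting a sign-preserving reflection of the cycle swapping its endpoints --- is false. Take the $6$-cycle with edge-sign sequence $(-,+,+,-,+,-)$ read around the cycle; it has three negative edges and is therefore unbalanced, but each of the three edge-axis reflections forces some $+$ edge onto a $-$ edge, and no nontrivial rotation preserves the (aperiodic) sign sequence, so no automorphism swaps the endpoints of any edge. One can check further that no two negatively-adjacent vertices of this cycle even have isomorphic $2$-hop ego-trees, so the failure is not an artifact of insisting on a global reflection --- the isomorphism-based strategy itself does not reach this configuration, which is presumably why the paper restricts itself to ``one unbalanced situation'' per cycle length rather than claiming all of them. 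Your propagation step (``every node of $C$ lies in some reflection-swapped pair joined by a negative edge'') inherits the same problem. Separately, you correctly flag that the reflection is not a symmetry of the ambient graph once the roots have off-cycle neighbors, but ``absorbed by treating the cycle as a minimal witness'' is not a repair; the paper silently makes the same idealization by drawing ego-trees of the bare cycle, so this is a shared weakness rather than a point where you fall short of the paper, but it remains unproved in your write-up.
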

\begin{figure}
    \centering    \includegraphics[width=0.7\columnwidth]{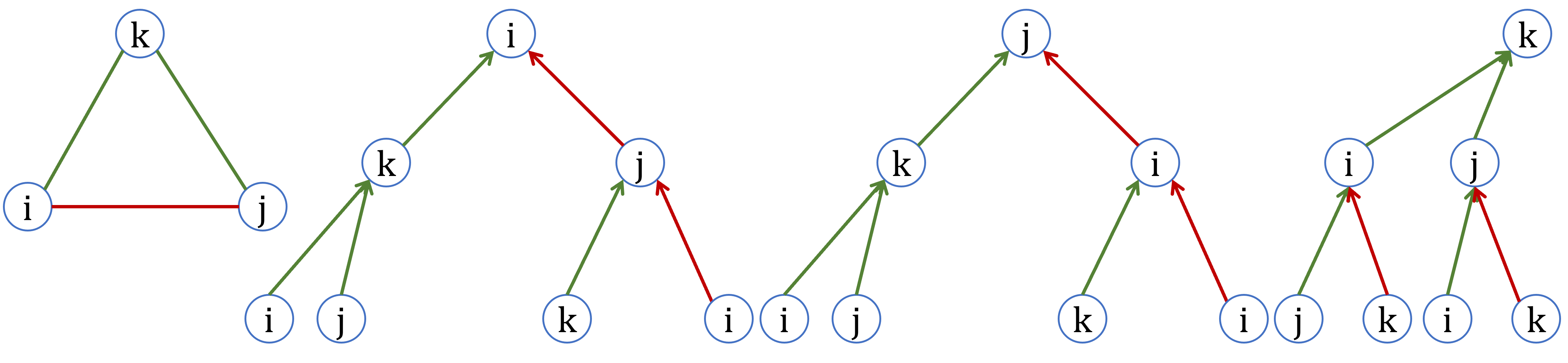}
    \caption{One unbalanced situation of cycle-3. Green and red lines represent + and - edges, resp.}
    \label{fig:situation (c)}
\end{figure}

\begin{figure}
    \centering    \includegraphics[width=0.7\columnwidth]{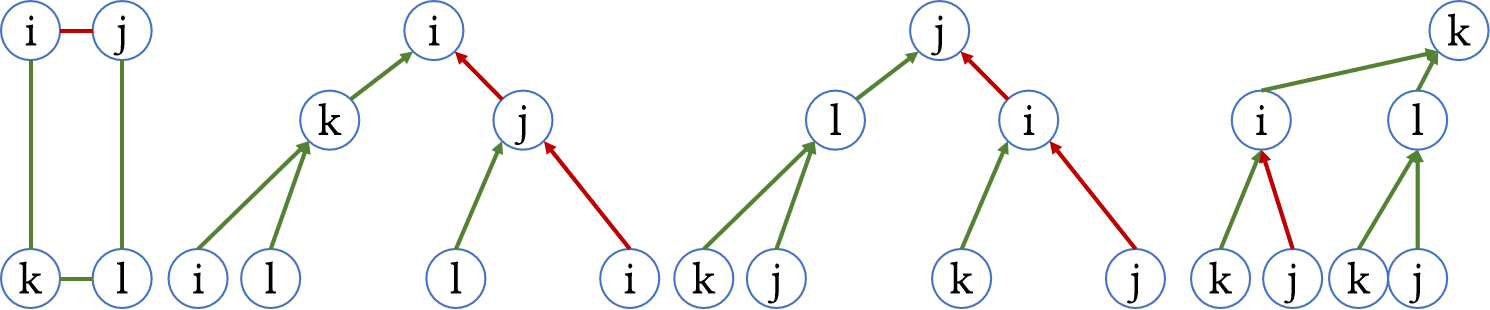}
    \caption{One unbalanced situation of cycle-4. Green and red lines represent + and - edges, resp.}
    \label{fig:cycle-4}
\end{figure}

\begin{proof}\label{prf:SGNN_limitation}
We consider one unbalanced situation of 3-cycle as shown in Figure \ref{fig:situation (c)}. In this scenario, we construct the 2-hop ego-trees ($\tau_i, \tau_j, \tau_k$) of nodes $v_i$, $v_j$, and $v_k$ as depicted in Figure~\ref{fig:situation (c)}. In constructing ego-trees, positive neighbors are positioned on the left side, while negative neighbors are on the right. It is evident that $\tau_{i}$ and $\tau_{j}$ exhibit isomorphism. Based
on the conclusion in \cite{xu2018powerful,zhang2023rsgnn}, they will be projected to the same embeddings. Conversely, as $\tau_i$ and $\tau_k$ are not isomorphic, they will be mapped to distinct embeddings. Thus, we deduce $dist(H_i,H_j) \leq dist(H_i, H_k)$, where $dist$ represents a distance metric, indicating that nodes connected by negative edges have closer representations than those connected by positive edges. By Definition \ref{def:adequate}, the learned representations $H_i, H_j, H_k$ are deemed inadequate for $v_i$, $v_j$, and $v_k$. Consequently, the representations of edges $e_{ij}$, $e_{ik}$, and $e_{jk}$ are also considered inadequate (see Def. \ref{def:adequate_edges}). Intuitively, during the training process of SGNN, node $v_i$ tends to pull node $v_j$ closer through edges $e_{ik}$ and $e_{kj}$, while simultaneously pushing $v_j$ away through edge $e_{ij}$. The conflicting structural information makes it hard for SGNN to learn an adequate spatial position for the three nodes.

\begin{figure}
    \centering
    \includegraphics[width=0.7\columnwidth]{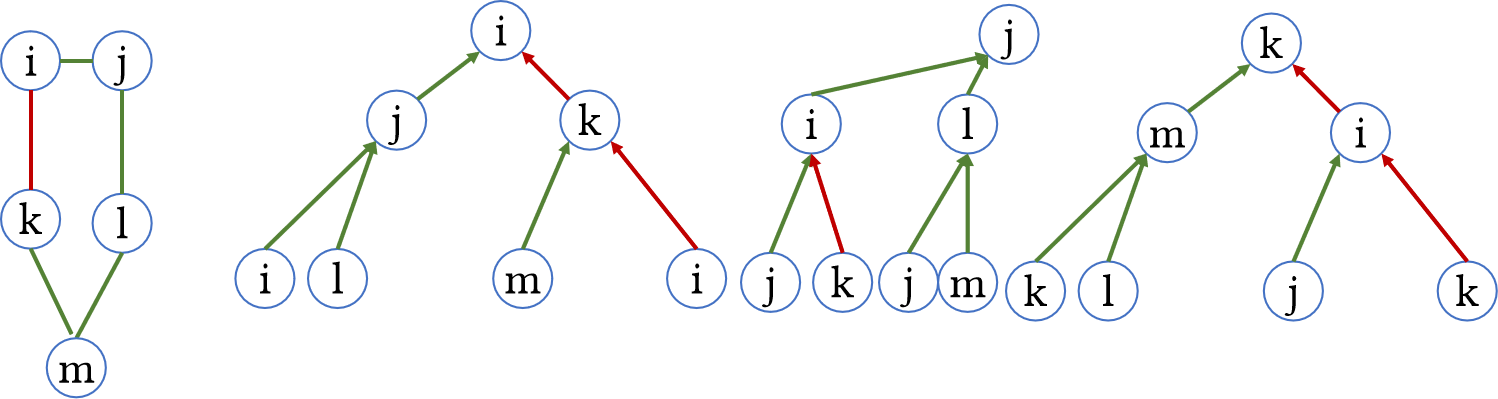}
    \caption{One unbalanced situation of cycle-5. Green and red lines represent + and - edges, resp.}
    \label{fig:cycle-5}
\end{figure}

\begin{figure}
    \centering
    \includegraphics[width=0.7\columnwidth]{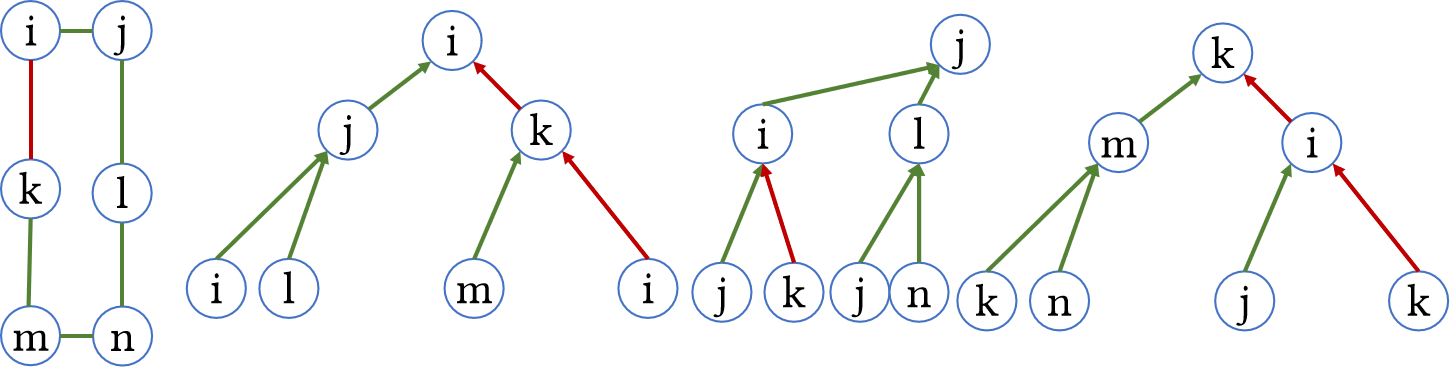}
    \caption{One unbalanced situation of cycle-6. Green and red lines represent + and - edges, resp.}
    \label{fig:cycle-6}
\end{figure}

We next consider one unbalanced situation of 4-cycle as shown in Figure \ref{fig:cycle-4}. In this scenario, we construct the 2-hop ego-trees of nodes ($\tau_i, \tau_j, \tau_k$)  $v_i$, $v_j$, and $v_k$ as depicted in Figure~\ref{fig:cycle-4}. Positive neighbors are positioned on the left side, while negative neighbors are on the right. Similar to the above scenario, it is evident that $\tau_{i}$ and $\tau_{j}$ exhibit isomorphism and $\tau_i$ and $\tau_k$ are not isomorphic. Thus, we get $dist(H_i,H_j) \leq dist(H_i, H_k)$. But, node $v_i$ is connected to $v_j$ with negative edge and is connected to $v_k$ with positive edge. By Definition \ref{def:adequate} and \ref{def:adequate_edges}, the representations of edges $e_{ij}$, $e_{ik}$, and $e_{jk}$ are considered inadequate.

Next, we consider one unbalanced situation of 5-cycle as shown in Figure \ref{fig:cycle-5}. In this situation, we construct the 2-hop ego-trees of nodes $v_i$, $v_j$, and $v_k$ as depicted in Figure~\ref{fig:cycle-5}. Positive neighbors are positioned on the left side, while negative neighbors are on the right. It is evident to find that $\tau_{i}$ and $\tau_{k}$ exhibit isomorphism and $\tau_i$ and $\tau_j$ are not isomorphic. Thus, we get $dist(H_i,H_k) \leq dist(H_i, H_j)$. But node $v_i$ is connected to $v_k$ with negative edge and is connected to $v_j$ with positive edge. By Definition \ref{def:adequate} and \ref{def:adequate_edges}, the representations of edges $e_{ij}$, $e_{ik}$, and $e_{jk}$ are considered inadequate.

Next, we consider one unbalanced situation of 6-cycle as shown in Figure \ref{fig:cycle-6}. When considering only 2-hop ego-tree, this situation is similar to the 5-cycle case mentioned above, so the proof is omitted. Considering that the majority of SGNN models only utilize information from two-hop neighbors \cite{derr2018signed,li2020learning}, it is reasonable for us not to consider the 3-hop ego-tree structure.
\end{proof}

\subsection{Triangle-based Difficulty measurer}
In this subsection, we describe the process of measuring the difficulty scores of training samples. Based on the above analysis, we conclude that SGNNs struggle to learn adequate representations for edges in unbalanced cycles. Thus, unbalanced cycles are difficult structures for SGNNs to learn. According to Table \ref{tab:n-cycle}, unbalanced triangles are more common than other unbalanced cycles, making them a greater challenge for model training. To improve efficiency, we
only consider the impact of unbalanced triangles on SGNN models. Intuitively, as shown in Figure \ref{fig:difficulty}, if an edge belongs to an unbalanced triangle, its difficulty score should be higher than those that do not. Firstly, we define local balance degree: 
\begin{table*}[]
\centering
\caption{The statistic of n-cycles ($n = \{3,4,5,6\}$) in six real-world datasets (see Sec. \ref{sec:settings}). \# n-cycles refers to the number of n-cycles, \# B(U)-cycles refers to the number of balanced (unbalanced) n-cycles.}
\resizebox{\linewidth}{!}{
\begin{tabular}{lcccccccccccccccccccccccc}
\hline
            & \multicolumn{4}{c}{Epinions}  & \multicolumn{4}{c}{Slashdot} & \multicolumn{4}{c}{Bitcoin-alpha} & \multicolumn{4}{c}{Bitcoin-OTC} & \multicolumn{4}{c}{WikiElec} & \multicolumn{4}{c}{WikiRfa} \\ \hline
n-cycle     & 3      & 4     & 5     & 6    & 3      & 4     & 5    & 6    & 3       & 4      & 5      & 6     & 3      & 4      & 5     & 6     & 3      & 4     & 5    & 6    & 3     & 4     & 5    & 6    \\ \hline
\# n-cycles & 159034 & 22224 & 13057 & 7073 & 24505  & 8813  & 3162 & 3757 & 3198    & 802    & 499    & 451   & 5027   & 1465   & 746   & 527   & 25610  & 15179 & 6194 & 4728 & 35024 & 11172 & 6949 & 5067 \\ \hline
\# B-cycles & 145559 & 19576 & 10635 & 5456 & 21436  & 7913  & 1909 & 2209 & 2793    & 625    & 389    & 355   & 4525   & 1175   & 548   & 356   & 20596  & 11605 & 4600 & 3527 & 26057 & 7624  & 4144 & 3008 \\ \hline
\# U-cycles & 13475  & 2648  & 2422  & 1617 & 3069   & 900   & 1253 & 1548 & 405     & 177    & 110    & 96    & 502    & 290    & 198   & 171   & 5014   & 3574  & 1594 & 1201 & 8967  & 3548  & 2805 & 2059 \\ \hline
\end{tabular}
}
\label{tab:n-cycle}
\end{table*}

\begin{figure}
    \centering
    \includegraphics[width=0.7\columnwidth]{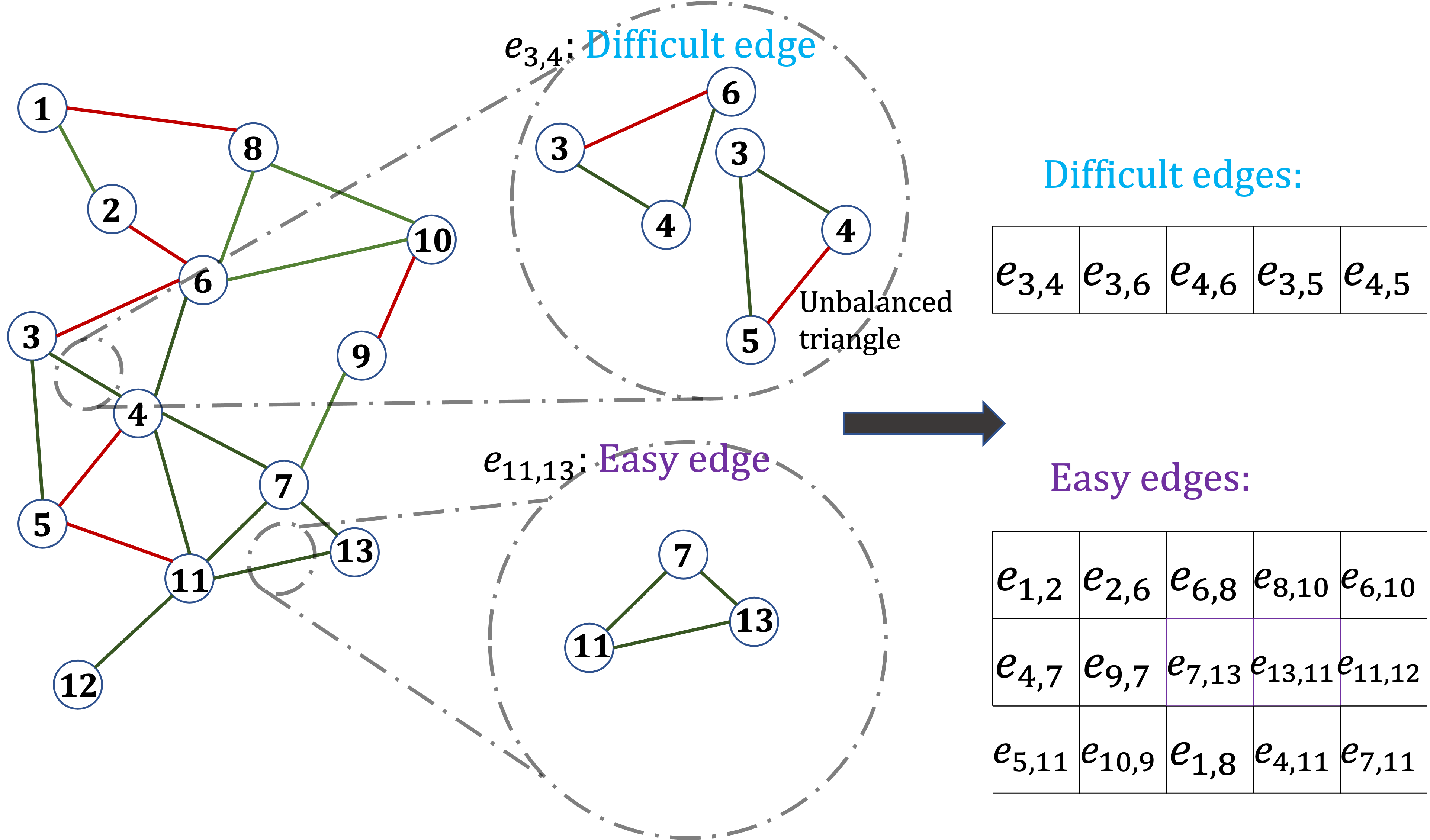}
    \caption{Illustration of node difficulty, where green lines represent positive edges and red lines represent negative edges.}
    \label{fig:difficulty}\vspace*{-0.2cm}
\end{figure}

\begin{definition}[Local Balance Degree]\label{local_balance_degree}
For edge $e_{ij}$, the local balance degree is defined by:
\begin{equation}
\label{eq:triangle_index}
D_3(e_{ij})=\frac{|\mathcal{O}_3^{+}(e_{ij})|}{|\mathcal{O}_3(e_{ij})|}
\end{equation}
where $\mathcal{O}_3^{+}(e_{ij})$ represents the set of balanced triangles containing edge $e_{ij}$, $\mathcal{O}_3(e_{ij})$ represents the set of triangles containing edge $e_{ij}$. $|\cdot|$ represents the set cardinal number.
\end{definition}

Based on this definition, for edge $v_{ij}$, if all triangles containing it are balanced, the $D_3(e_{ij})$ is $1$, if all of the triangles containing it are unbalanced, the $D_3(e_{ij})$ is $0$. For those edges that do not belong to any triangles, we set their local balance degree to $1$.
After obtaining the local balance degree for each edge, we can calculate the difficulty score of edge $e_{ij}$ as below:
\begin{equation}\label{eq:score}
    Score(e_{ij}) =  1 - \frac{|\mathcal{O}_3^{+}(e_{ij})|}{|\mathcal{O}_3(e_{ij})|}
\end{equation}

\subsection{Training Scheduler}
After measuring the difficulty score of each sample (i.e., edge) in the training set, we use a curriculum-based training strategy to train a better SGNN model, as shown in Figure \ref{fig:framework}(3). We follow similar methods in \cite{wei2022clnode} to generate the easy-to-difficult curriculum. More specifically, we first sort the training set $\mathcal{E}$ in ascending order of difficulty scores. Then, a pacing function $g(t)$ is used to place these edges to different training epochs from easy to difficult, where $t$ refers to $t$-th epoch. In this paper, we consider three pacing functions, i.e., linear, root, and geometric. The linear function increases the difficulty of training samples at a uniform rate; the root function introduces more difficult samples in fewer epochs, while the geometric function trains for a greater number of epochs on the subset of easy edges before introducing difficult edges. These three functions are defined:{ \footnotesize $g(t)=\min \left(1, \lambda_0+\left(1-\lambda_0\right) * \frac{t}{T}\right)$ (linear)}, {\footnotesize $g(t)=\min \left(1, \sqrt{\lambda_0^2+\left(1-\lambda_0^2\right) * \frac{t}{T}}\right)$ (root)}, {\footnotesize $g(t)=\min \left(1, \sqrt{\lambda_0^2+\left(1-\lambda_0^2\right) * \frac{t}{T}} \right)$ (geometric)}. $\lambda_0$ denotes the initial proportion of the available easiest examples and $T$ denotes the training epoch when $g(t)$ reaches 1. 

The process of CSG is detailed in Algorithm \ref{algo:csg}. The CSG method is highly efficient, with the majority of computational cost stemming from Equation \ref{eq:triangle_index}, which has a time complexity of $O(r)$, where $r$ represents the maximum number of neighbors for a single node. Calculating $\mathcal{O}_3(e_{ij})$ and $\mathcal{O}_3^{+}(e_{ij})$ is equivalent to identifying the common neighbors of nodes $i$ and $j$, which requires searching through two ordered neighbor lists and takes $O(r)$ time.
\begin{algorithm}
\footnotesize
\caption{CSG Training Algorithm\label{algo:csg}}
\begin{algorithmic}
\STATE \textbf{Data:} A signed graph $\mathcal{G} = (\mathcal{V}, \mathcal{E})$, the SGNN model $f$, the pacing function $g(t)$
\STATE \textbf{Result:} SGNN parameters $\theta$
\STATE Initialize SGNN parameter $\theta$\
\FOR{$e_{ij} \in \mathcal{E}$}
    \STATE $Score(e_{ij}) \leftarrow$ Eq.\ref{eq:score}
\ENDFOR
\STATE Sort $\mathcal{E}$ according to difficulty in ascending order\;
\STATE Let $t=1$\
\WHILE{\textit{Stopping condition is not met}}
\STATE $\lambda_t \leftarrow g(t)$ \;
\STATE $\mathcal{E}_t \leftarrow \mathcal{E}\left[0, \ldots, \lambda_t \cdot |\mathcal{E}|\right]$\;
\STATE Use $f$ to predict the labels $\hat{\sigma(\mathcal{E}_t)}$\;
\STATE Calculate cross-entropy loss $\mathcal{L}$ on $\{\hat{\sigma(\mathcal{E}_t)}, \sigma(\mathcal{E}_t)\}$\;
\STATE Back-propagation on $f$ for minimizing $\mathcal{L}$\;
\STATE $t \leftarrow t+1$
\ENDWHILE
\STATE \textbf{return} $\theta$
\end{algorithmic}
\end{algorithm}

\section{Experiments}
In this section, we initiate our evaluation by examining the improvements facilitated by CSG when compared to various SGNN models for link sign prediction. 
Following this, we examine how model performance varies with different training dataset proportions. Then, we analyze the performance differences between hard and easy samples under the CSG training framework. Lastly, we perform ablation studies to assess the impact of different pacing functions and explore CSG's parameter sensitivity.

\subsection{Experimental Settings}\label{sec:settings}
We perform experiments on six real-world datasets: Bitcoin-OTC, Bitcoin-Alpha, WikiElec, WikiRfa, Epinions, and Slashdot. Key statistical information is provided in Table \ref{tab:datasets}. Since these datasets have no node features, we randomly generate a 64-dimensional vector for each node as the initial features. In the following, we introduce datasets briefly.

\begin{table}[]
\footnotesize
\centering 
\caption{The statistics of datasets.}
\resizebox{0.7\textwidth}{!}{ 
\begin{tabular}{cccc}
\hline
Dataset       & \# Links & \# Positive Links & \# Negative Links \\ \hline
Bitcoin-OTC   & 35,592    & 32,029             & 3,563              \\
Bitcoin-Alpha & 24.186    & 22,650             & 1,536              \\
WikiElec      & 103,689   & 81,345             & 22,344             \\
WikiRfa       & 170,335   & 133,330            & 37,005             \\
Epinions      & 840,799   & 717,129            & 123,670            \\
Slashdot      & 549,202   & 425,072            & 124,130            \\ \hline
\end{tabular}}
\label{tab:datasets}
\end{table}

\textbf{Bitcoin-OTC} \cite{kumar2016edge} and \textbf{Bitcoin-Alpha} are two datasets extracted from Bitcoin trading platforms. Due to the fact Bitcoin accounts are anonymous, people give trust or not-trust tags to others to enhance security.

\textbf{WikiElec} \cite{leskovec2010signed,leskovec2010predicting} is a voting network in which users can choose to trust or distrust other users in administer elections. \textbf{WikiRfa} \cite{west2014exploiting} is a more recent version of WikiElec.

\textbf{Epinions} \cite{leskovec2010signed} is a consumer review site with trust or distrust relationships between users.

\textbf{Slashdot} \cite{leskovec2010signed} is a technology-related news website in which users can tag each other as friends (trust) or enemies (distrust).

\begin{table}[]
\centering
\caption{Statistics of triangles in each experiments. TR refers to training ratio. B (U) refers to the number of Balanced (Unbalanced) triangles. R (\%) refers to the ratio of B/U.}
\scriptsize
\begin{tabular}{cccccccccc}
\hline
TR                & \multicolumn{3}{c}{Bitcoin-otc} & \multicolumn{3}{c}{Bitcoin-Alpha} & \multicolumn{3}{c}{WikiElec} \\ \hline
                     & B         & U        & R        & B          & U         & R        & B         & U       & R      \\ \hline
40\%                 & 1690      & 99       & 17.1     & 1819       & 292       & 6.2      & 2396      & 1678    & 1.4    \\
50\%                 & 1977      & 134      & 14.8     & 1904       & 271       & 7.0      & 3752      & 2267    & 1.7    \\
60\%                 & 2678      & 211      & 12.7     & 2218       & 260       & 8.5      & 5701      & 2493    & 2.3    \\
70\%                 & 3378      & 291      & 11.6     & 2689       & 565       & 4.8      & 8446      & 3980    & 2.1    \\
80\%                 & 3486      & 307      & 11.4     & 2696       & 370       & 7.3      & 10853     & 3741    & 2.9     \\ \hline
TR                  & \multicolumn{3}{c}{WikiRfa}     & \multicolumn{3}{c}{Epinions}      & \multicolumn{3}{c}{Slashdot} \\ \hline
\multicolumn{1}{l}{} & B         & U        & R        & B          & U         & R        & B         & U       & R      \\ \hline
40\%                 & 14284     & 5720     & 2.5      & 62649      & 6825      & 9.2      & 12645     & 2115    & 6.0    \\
50\%                 & 14302     & 4736     & 3.0      & 77537      & 13538     & 5.7      & 15626     & 2535    & 6.2    \\
60\%                 & 13747     & 4504     & 3.1      & 85150      & 13360     & 6.4      & 14872     & 2992    & 5.0    \\
70\%                 & 15965     & 6227     & 2.6      & 91532      & 13546     & 6.8      & 15128     & 2697    & 5.6    \\
80\%                 & 23607     & 7276     & 3.2      & 106055     & 7776      & 13.6     & 17080     & 3106    & 5.5    \\ \hline
\end{tabular}
\label{tab:stats_tri}
\end{table}

Further statistics regarding balanced and unbalanced triangles are provided in Table \ref{tab:stats_tri}, encompassing training ratios spanning from 40\% to 80\%. Importantly, the statistics showcases a consistent ratio of both balanced and unbalanced triangles across all proportional edge selections for the training set, indicating a stable performance regardless of the training set size. The experiments were performed on a Windows machine with eight 3.8GHz AMD cores and a 80GB A100 GPU.

We use five popular SGNNs as the backbone models, namely SGCN \cite{derr2018signed}, SNEA \cite{li2020learning}, SDGNN \cite{huang2021sdgnn}, SGCL \cite{shu2021sgcl} and GS-GNN \cite{liu2021signed}, which are representative methods of SGNNs. 
With regard to the hyper-parameters in the baselines, to facilitate fair comparison, we employ the same backbone SGNN of CSG as the baselines. We set $\lambda_0$ to 0.25, $T$ to 20 and use the linear pacing function by default.

\subsection{Experiment Results}
\begin{table*}[]
\scriptsize
\centering
\caption{Link sign prediction results (average $\pm$ standard deviation) with AUC (\%) on six benchmark datasets.}
\resizebox{\textwidth}{!}{ 
\begin{tabular}{cccccccc}
\hline
                        & Method    & Bitcoin-OTC                  & Bitcoin-Alpha              & WikiElec & WikiRfa & Epinions & Slashdot \\ \hline
\multirow{3}{*}{SGCN}   & Original  & 82.5 $\pm$ \scriptsize{4.3} & 79.2 $\pm$ \scriptsize{4.4} & 65.7 $\pm$ \scriptsize{8.1}  & 66.1 $\pm$ \scriptsize{9.1}    &  72.5 $\pm$ \scriptsize{4.7} & 58.6 $\pm$ \scriptsize{4.9}        \\
                        & +CSG      & 85.3 $\pm$ \scriptsize{1.6} & 85.1  $\pm$ \scriptsize{1.5}  & 78.1 $\pm$ \scriptsize{1.0} & 76.6 $\pm$ \scriptsize{0.7} & 80.3 $\pm$ \scriptsize{1.5}         &  72.5 $\pm$ \scriptsize{0.4}        \\
                        & (Improv.) &  3.4\%        &  7.4\%          &  18.9\%    &  15.9\%   &  10.7\%    &  23.7\%    \\ \hline
\multirow{3}{*}{SNEA}   & Original  &  82.8 $\pm$ \scriptsize{3.9} & 81.2 $\pm$ \scriptsize{4.1}  & 69.3 $\pm$ \scriptsize{6.5} & 69.8 $\pm$ \scriptsize{5.2}  & 77.3 $\pm$ \scriptsize{3.1} & 66.3 $\pm$ \scriptsize{4.2}          \\
                        & +CSG      & 86.3 $\pm$ \scriptsize{1.3} & 87.1 $\pm$ \scriptsize{1.3}   & 79.3 $\pm$ \scriptsize{1.1}  & 78.2 $\pm$ \scriptsize{1.0}  & 81.7 $\pm$ \scriptsize{0.8}    & 75.1 $\pm$ \scriptsize{0.7}           \\
                        & (Improv.) &  4.2\%        &  7.2\%       & 14.4\%        & 12.0\%        &  5.7\%        &  13.3\%        \\ \hline
\multirow{3}{*}{SDGNN}  & Original  & 85.3 $\pm$ \scriptsize{5.3} &  82.2 $\pm$ \scriptsize{4.7} & 73.3 $\pm$ \scriptsize{6.1} & 76.8 $\pm$ \scriptsize{4.3}  & 81.3 $\pm$ \scriptsize{4.8} & 73.3 $\pm$ \scriptsize{4.4}         \\
                        & +CSG      & 88.1 $\pm$ \scriptsize{1.5}  &  87.5 $\pm$ \scriptsize{2.0}  & 80.7 $\pm$ \scriptsize{1.6} & 81.0 $\pm$ \scriptsize{1.0}    & 85.5 $\pm$ \scriptsize{0.7} & 77.3 $\pm$ \scriptsize{1.7}          \\
                        & (Improv.) &  3.3\%      &     6.4\%     & 10.1\%   & 5.5\%   & 5.2\%    & 5.5\%           \\ \hline
\multirow{3}{*}{SGCL}   & Original  & 88.2 $\pm$ \scriptsize{6.2} & 85.4 $\pm$ \scriptsize{5.2}  & 80.4 $\pm$ \scriptsize{4.1} & 82.1 $\pm$ \scriptsize{3.8} & 86.4 $\pm$ \scriptsize{5.1}& 82.3 $\pm$ \scriptsize{5.1}         \\
                        & +CSG      & 90.3 $\pm$ \scriptsize{1.2} & 89.2 $\pm$ \scriptsize{1.4} & 85.2 $\pm$ \scriptsize{1.8}  & 86.2 $\pm$ \scriptsize{2.0}& 88.7 $\pm$ \scriptsize{1.3} & 86.1 $\pm$ \scriptsize{1.1}          \\
                        & (Improv.) &  2.4\%      & 4.4\%         &  6.0\%   & 5.0\%   & 2.7\%    & 4.6\%          \\ \hline
\multirow{3}{*}{GS-GNN} & Original  & 89.1 $\pm$ \scriptsize{4.3}  & 87.3 $\pm$ \scriptsize{4.9}  & 81.3 $\pm$ \scriptsize{5.0}  & 80.5 $\pm$ \scriptsize{4.1}& 88.3 $\pm$ \scriptsize{3.5}& 90.7 $\pm$ \scriptsize{4.4}         \\
                        & +CSG      &  94.1 $\pm$ \scriptsize{1.1}  & 92.6 $\pm$ \scriptsize{1.9}  & 86.7 $\pm$ \scriptsize{2.1}  & 87.2 $\pm$ \scriptsize{1.1} & 92.6 $\pm$ \scriptsize{2.1} & 94.2 $\pm$ \scriptsize{1.0}          \\
                        & (Improv.) & 5.6\%  & 6.1\% & 6.6\%          &  8.3\%        &  4.9\%         &   3.9\%       \\ \hline
\end{tabular}
}
\label{tab:auc}
\end{table*}

As per \cite{huang2021signed}, we use $10\%$ test, $5\%$ validation, and $85\%$ training data across datasets. Five runs yield average AUC and F1-binary scores and deviations in Table \ref{tab:auc} and Table \ref{tab:performance_f1}.

Our conclusions from the results are as follows: 1. CSG effectively enhances the performance of five prominent SGNN models in the context of link sign prediction. 2. Integration with CSG reduces the standard deviation of SGNNs' performance significantly, indicating a reduction in the inherent randomness of the backbone SGNN models. 
3. It is worth highlighting that the impact of CSG's performance improvements varies across datasets, with Bitcoin-OTC, Bitcoin-Alpha, and Epinions showing relatively modest gains compared to WikiElec, WikiRfa, and Slashdot. This discrepancy can be attributed to the lower ratio of unbalanced triangles in the former group, as evidence from the data in Table \ref{tab:stats_tri}, which reduces the influence of the training scheduler and restricts the potential for performance enhancement.
\begin{table*}[]
\centering
\footnotesize
\caption{Link Sign Prediction Results with F1 (\%) on six benchmark datasets}
\resizebox{\textwidth}{!}{ 
\begin{tabular}{cccccccc}
\hline
                        & Method    & Bitcoin-OTC                  & Bitcoin-Alpha              & WikiElec & WikiRfa & Epinions & Slashdot \\ \hline
\multirow{3}{*}{SGCN}   & Original  & 92.1 $\pm$ \scriptsize{1.9} & 92.9 $\pm$ \scriptsize{0.8} & 86.0 $\pm$ \scriptsize{2.8}  & 84.2 $\pm$ \scriptsize{2.3}    &  91.5 $\pm$ \scriptsize{0.9} & 83.6 $\pm$ \scriptsize{2.9}        \\
                        & +CSG      & 93.9 $\pm$ \scriptsize{0.8} & 93.9  $\pm$ \scriptsize{0.6}  & 87.1 $\pm$ \scriptsize{0.6} & 86.0 $\pm$ \scriptsize{0.4} & 92.7 $\pm$ \scriptsize{0.4}         &  84.6 $\pm$ \scriptsize{0.3}        \\
                        & (Improv.) &  2.0\%        & 1.1\%          &  1.3\%    &  2.1\%   &  1.3\%    &  1.2\%    \\ \hline
\multirow{3}{*}{SNEA}   & Original  & 92.5 $\pm$ \scriptsize{2.2} & 92.8 $\pm$ \scriptsize{0.9}   & 86.3 $\pm$ \scriptsize{2.5} & 84.2 $\pm$ \scriptsize{2.3}   & 92.1 $\pm$ \scriptsize{1.2} & 84.0 $\pm$ \scriptsize{2.3}          \\
                        & +CSG      & 94.1 $\pm$ \scriptsize{0.3} & 94.3  $\pm$ \scriptsize{0.3}  & 87.1 $\pm$ \scriptsize{0.6}  &  86.6 $\pm$ \scriptsize{0.3} & 93.4$\pm$ \scriptsize{0.6}     &   86.1 $\pm$ \scriptsize{0.4}      \\
                        & (Improv.) &  1.6\%     &  1.6\%         &   0.9\%   & 2.9\%   & 1.4\%   &  2.5\%         \\ \hline
\multirow{3}{*}{SDGNN}  & Original  &  91.3 $\pm$ \scriptsize{2.1} & 93.1 $\pm$ \scriptsize{1.0}  & 87.7 $\pm$ \scriptsize{2.2} & 85.3 $\pm$ \scriptsize{2.5} &92.7 $\pm$ \scriptsize{1.1}& 85.8 $\pm$ \scriptsize{1.9}         \\
                        & +CSG      & 94.3 $\pm$ \scriptsize{0.5}  & 93.8  $\pm$ \scriptsize{0.3}  & 89.2 $\pm$ \scriptsize{0.5} & 87.1 $\pm$ \scriptsize{1.0}  & 93.3 $\pm$ \scriptsize{0.4}    & 88.5 $\pm$ \scriptsize{0.2}         \\
                        & (Improv.) &  3.3\%      & 0.8\%         &  1.7\%   & 2.1\%   & 0.6\%    &  3.1\%        \\ \hline
\multirow{3}{*}{SGCL}   & Original  &  92.5 $\pm$ \scriptsize{1.5}  & 92.5 $\pm$ \scriptsize{1.1} & 89.6 $\pm$ \scriptsize{1.2} & 88.1 $\pm$ \scriptsize{1.6}  & 95.3 $\pm$ \scriptsize{1.3}  & 90.1 $\pm$ \scriptsize{1.1}         \\
                        & +CSG      & 94.2 $\pm$ \scriptsize{0.2}  &  93.6 $\pm$ \scriptsize{0.2}   & 91.1 $\pm$ \scriptsize{0.4}  & 92.6 $\pm$ \scriptsize{0.7}&  96.7 $\pm$ \scriptsize{0.3}  &92.5 $\pm$ \scriptsize{0.3}          \\
                        & (Improv.) &  1.8\%      &  1.2\%        & 1.7\%     & 5.1\%  &   1.5\%   &   2.7\%       \\ \hline
\multirow{3}{*}{GS-GNN} & Original  & 92.5 $\pm$ \scriptsize{1.7}  &  93.5 $\pm$ \scriptsize{2.1}  & 90.3 $\pm$ \scriptsize{1.5}  & 92.1 $\pm$ \scriptsize{1.6}& 94.1 $\pm$ \scriptsize{1.8}  &  89.8 $\pm$ \scriptsize{2.1}         \\
                        & +CSG      & 94.2 $\pm$ \scriptsize{0.7}  & 94.8 $\pm$ \scriptsize{0.3}  & 92.7 $\pm$ \scriptsize{0.8}  & 94.2 $\pm$ \scriptsize{0.3}   & 95.3 $\pm$ \scriptsize{0.5} & 91.9 $\pm$ \scriptsize{1.0}         \\
                        & (Improv.) & 1.8\%   &  1.4\%       &  2.7\%    &  2.2\%     &  1.3\%         &   2.3\%       \\ \hline
\end{tabular}
}
\label{tab:performance_f1}
\end{table*}
We further verify the effectiveness of CSG on more incomplete graphs, says 40\% - 70\% edges as training samples, 5\% edges as validation set and the remaining as test set. we use SGCN \cite{derr2018signed} as backbone(original) model, the results are shown in Table \ref{tab:incomplete}. From the results we conclude: 1. With a decrease in the training ratio, the model's performance indeed exhibits a decline. This can be attributed to the reduced amount of information available for the model, consequently leading to a decrease in its predictive capability. 2. The stabilizing effect of CSG's improvement is evident. In Table 4, we suggest that adjusting training proportions is unlikely to greatly alter the balance between balanced and unbalanced triangles, thus contributing to the consistent enhancement brought by CSG.
\begin{table*}[]
\setlength{\abovecaptionskip}{0.1cm}
\caption{Experimental Performance (AUC, average $\pm$ standard deviation) on Training ratio from 40\% - 70\%, TR = training ratio.}\label{tab:incomplete}
\centering
\footnotesize
\resizebox{\textwidth}{!}{ 
\begin{tabular}{llllllll}
\hline
TR                    & Data    & Bitcoin-OTC & Bitcoin-Alpha & WikiElec & WikiRfa & Epinions & Slashdot \\ \hline
\multirow{3}{*}{70\%} & Original  & 80.3 $\pm$ 4.3  & 77.1 $\pm$ 5.4 &  63.2 $\pm$ 7.7 & 63.5 $\pm$ 5.5 & 71.3 $\pm$ 5.2& 59.3 $\pm$ 4.8\\
                      & +CSG      & 85.0 $\pm$ 1.5  & 84.8 $\pm$ 1.3  &  75.5 $\pm$ 1.3 & 74.3 $\pm$ 1.3  & 79.8 $\pm$ 0.8 & 72.0 $\pm$ 1.7 \\
                      & (Improv.) & 5.9\%  & 10.0\%              &  19.4\%        &  17.0\%       &  11.9\%  & 21.4\%     \\ \hline
\multirow{3}{*}{60\%} & Original  & 79.6 $\pm$ 2.6 & 76.5 $\pm$ 4.2  & 61.7 $\pm$ 5.6  & 62.1 $\pm$ 3.7  & 70.5 $\pm$ 4.3 & 57.5 $\pm$ 6.3     \\
                      & +CSG      & 83.8 $\pm$ 1.3 & 83.5 $\pm$ 1.1  & 73.1 $\pm$ 1.6  & 73.1 $\pm$ 1.1  & 78.5 $\pm$ 1.1 &70.7$\pm$ 1.5    \\
                      & (Improv.) &  5.3\%   & 9.2\%                & 18.4\%         & 17.7\%   & 11.3\%     & 23.0\%  \\ \hline
\multirow{3}{*}{50\%} & Original  & 76.3 $\pm$ 6.2 & 74.2 $\pm$ 4.4  & 60.3 $\pm$ 4.9 & 63.3 $\pm$ 5.2  & 68.9 $\pm$ 5.8 & 57.1 $\pm$ 5.4 \\
                      & +CSG      & 83.4 $\pm$ 1.8 & 83.1 $\pm$ 1.3 & 71.5 $\pm$ 1.2  & 72.8 $\pm$ 0.9 & 78.3 $\pm$ 0.9  & 70.2$\pm$ 1.3  \\
                      & (Improv.) &  9.3\%           &  12.0\%      &  18.6\%        & 15.0\%        & 13.6\%  &   22.9\%       \\ \hline
\multirow{3}{*}{40\%} & Original  &  78.3 $\pm$ 3.1 & 75.3 $\pm$ 5.1 & 61.7 $\pm$ 6.1 & 64.1 $\pm$ 4.2 & 69.6 $\pm$ 6.1 & 57.3 $\pm$ 7.1          \\
                      & +CSG      &  82.0 $\pm$ 1.1 & 82.7 $\pm$ 1.7 & 70.1 $\pm$ 1.9 & 72.5 $\pm$ 1.2 & 77.1 $\pm$ 1.6 & 69.8$\pm$ 1.1         \\
                      & (Improv.) &  4.7\%     &  9.8\%             & 13.6\%         & 13.1\%        & 10.8\% & 21.8\%          \\ \hline
\end{tabular}
}
\end{table*}

Prior experiments validated CSG's efficiency. Next, we'll analyze improved prediction performance via CSG for specific sample types. 
We categorize test edges into easy and hard edges, based on unbalanced triangle membership. The link sign prediction performance for both types is shown in Table \ref{tab:easy}. CSG enhances the original model for \textit{both} cases, particularly benefiting easy edges. This agrees with our expectation that delaying unbalanced triangle training yields greater stability for straightforward edges. Based on the preceding theoretical analysis, we can deduce that SGNN struggles to learn appropriate representations from hard edges. When both easy and hard edges are simultaneously incorporated into the training process, the presence of hard edges might disrupt the model's ability to learn adequate representations from the easy edges. However, by prioritizing the consideration of easy edges, we effectively sidestep the interference caused by these hard edges, resulting in a significant improvement in learning information from the easy edges. Consequently, this approach also leads to a marginal enhancement in the performance of hard edges.
\begin{table*}[]
\centering
\footnotesize
\caption{Link Sign Prediction Performance (AUC, average $\pm$ standard deviation) for Easy Links and Hard Links.}
\resizebox{\textwidth}{!}{ 
\begin{tabular}{cccccccc}
\hline
                            &           & Bitcoin-OTC & Bitcoin-Alpha & WikiElec & WikiRfa & Epinions & Slashdot \\ \hline
Easy Links                  & Backbone  & 84.3 $\pm$ 5.1       & 80.9 $\pm$ 4.8         & 67.1 $\pm$ 8.4    & 68.3 $\pm$ 9.3   & 74.1 $\pm$ 4.1     & 60.1 $\pm$ 5.2     \\
                            & +CSG      & 87.2 $\pm$ 1.5       & 86.8 $\pm$ 0.8         & 79.9 $\pm$ 1.7    & 78.5 $\pm$ 1.2   & 82.3 $\pm$ 1.3    & 74.1 $\pm$ 1.0    \\
                            & (Improv.) & 3.4\%                & 7.3\%                  & 19.1\%            & 14.9\%           & 11.1\%   & 23.3\%   \\ \hline
\multirow{3}{*}{Hard Links} & Backbone  & 75.3 $\pm$ 4.5       & 74.1 $\pm$ 6.1         & 60.2 $\pm$ 7.2    & 61.3 $\pm$ 8.1   & 66.4 $\pm$ 4.4    & 51.2 $\pm$ 4.1     \\
                            & +CSG      & 76.1 $\pm$ 1.2       & 78.6 $\pm$ 1.9         & 64.5 $\pm$ 0.7     & 65.5 $\pm$ 0.7   & 68.3 $\pm$ 1.5    & 55.8 $\pm$ 1.5     \\
                            & (Improv.) & 1.1\%                & 6.1\%                  & 7.1\%   & 6.9\%            & 2.9\%    & 9.0\%    \\ \hline
\end{tabular}
}
\label{tab:easy}
\end{table*}

\subsection{Ablation Study}

\begin{table}[]
\centering
\footnotesize
\caption{Test AUC (\%) results (average $\pm$ standard deviation) on six benchmark datasets with different pacing functions.}
\begin{tabular}{lccc}
\hline
\multicolumn{1}{c}{} & linear & root & geometric \\ \hline
Bitcoin-OTC          & \textbf{85.3} $\pm$ \scriptsize{1.6}       & 85.2 $\pm$ \scriptsize{1.5}  & 85.0 $\pm$ \scriptsize{1.4}          \\
Bitcoin-Alpha        & 85.1  $\pm$ \scriptsize{1.5}       &  85.2 $\pm$ \scriptsize{1.2}     &  \textbf{85.6} $\pm$ \scriptsize{1.8}         \\
WikiElec             & 78.1 $\pm$ \scriptsize{1.0}       & 77.6 $\pm$ \scriptsize{0.6}     & \textbf{78.4} $\pm$ \scriptsize{1.2}          \\
WikiRfa              & \textbf{76.6} $\pm$ \scriptsize{0.7}       & 76.2 $\pm$ \scriptsize{0.8}     & 76.2 $\pm$ \scriptsize{0.6}         \\
Epinions             & 80.3 $\pm$ \scriptsize{1.5}       & \textbf{80.9} $\pm$ \scriptsize{0.6}     &  79.6 $\pm$ \scriptsize{1.1}        \\
Slashdot             & \textbf{72.5} $\pm$ \scriptsize{0.4}       & 71.5 $\pm$ \scriptsize{1.2}      & 71.1 $\pm$ \scriptsize{1.4}           \\ \hline
\end{tabular}
\label{tab:ablation}
\end{table}

\begin{table}[]
\centering
\footnotesize
\caption{Test F1 (\%) resultss (average $\pm$ standard deviation) on six benchmark datasets with different pacing functions.}
\begin{tabular}{lccc}
\hline
\multicolumn{1}{c}{} & linear & root & geometric \\ \hline
Bitcoin-OTC          & \textbf{93.9} $\pm$ \scriptsize{0.8}   & \textbf{93.9} $\pm$ \scriptsize{0.8}   &  93.8 $\pm$ \scriptsize{0.5}         \\
Bitcoin-Alpha        & 93.9  $\pm$ \scriptsize{0.6}  & \textbf{94.2} $\pm$ \scriptsize{0.2}    &   93.8 $\pm$ \scriptsize{0.3}        \\
WikiElec             & 87.1 $\pm$ \scriptsize{0.6}    & \textbf{87.3} $\pm$ \scriptsize{0.6}      & 86.3 $\pm$ \scriptsize{0.7}           \\
WikiRfa              & 86.0 $\pm$ \scriptsize{0.4}  &  \textbf{86.4} $\pm$ \scriptsize{0.7}     & 85.7 $\pm$ \scriptsize{0.8}          \\
Epinions             & 92.7 $\pm$ \scriptsize{0.4}  & \textbf{92.9} $\pm$ \scriptsize{0.4}     & 92.6 $\pm$ \scriptsize{0.4}          \\
Slashdot             & \textbf{84.6} $\pm$ \scriptsize{0.3}   & 84.5 $\pm$ \scriptsize{0.9}      & 84.5 $\pm$ \scriptsize{1.3}        \\ \hline
\end{tabular}
\label{tab:ablation_f1}
\end{table}

We test different pacing functions (linear, root, geometric) with SGCN as the backbone model. Table \ref{tab:ablation} shows a slight edge for linear pacing. Yet, in general, the pacing function choice minimally affects CSG's performance. The reason for this experimental outcome might lie in the fact that the real-world datasets are exceedingly sparse, resulting in a limited number of edges belonging to unbalanced triangles. Therefore, different pacing functions brings about negligible changes in the weights of training samples. A significant portion of hard edges is fed into the model in the final few rounds of training. More F1-binary experimental results refer to Table \ref{tab:ablation_f1}.  

\subsection{Parameter Sensitivity}
\begin{figure}
    \centering    \includegraphics[width=0.7\columnwidth]{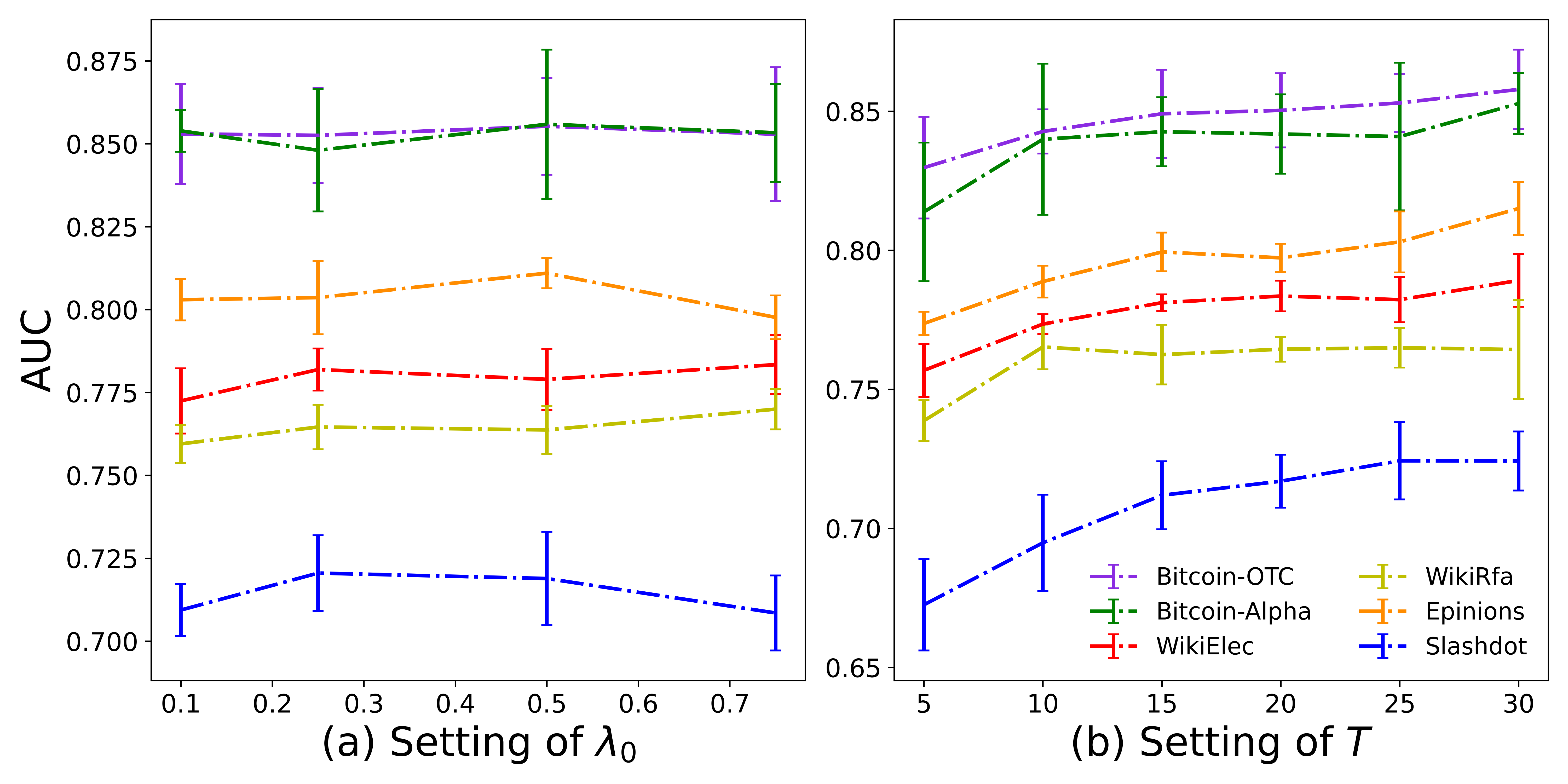}
    \caption{AUC result (average $\pm$ standard deviation) of CSG under different values of the hyper-parameters $\lambda_0$ and $T$}
    \label{fig:para}
\end{figure}
We examine how $\lambda_0$ and $T$ affect CSG performance. $\lambda_0$ sets initial training ratio, and $T$ controls difficult sample introduction speed. To explore the parameter sensitivity, we select $\lambda_0$ from $\{0.1, 0.25, 0.5, 0.75\}$ and $T$ from $\{5, 10, 15, 20, 25,30\}$, respectively. We use SGCN as the backbone and employ linear pacing function. The result in Figure \ref{fig:para} shows the following: (1) generally, with the $\lambda_0$ increasing, the AUC value first increases and then decreases. The performance is reasonable for most datasets when $\lambda_0\in [0.25, 0.5]$. A too-smaller $\lambda_0$ means fewer training samples are introduced in the initial stage of model training, which makes model incapable of learning efficiently. But as $\lambda_0$ increases, more edges with high difficult scores are used in initial training process which will degrade the model performance. (2) As $T$ increases, the model performance improves quickly. But this trend slows down as $T$ continues to increase. A too-small $T$ quickly introduces more difficult edges which can degrade the performance of backbone. A too-large $T$ makes SGNNs to be trained on the easy edges most of the time which causes a loss of the information and increases the training time.

\section{Conclusion}
We explore SGNN training, typically with randomly ordered samples, resulting in equal contributions. We propose SGNNs benefit from a curriculum approach similar to human learning, introducing CSG rooted in curriculum learning. While CSG doesn't address the representation limitations of SGNNs, it effectively alleviate the negative impact of hard samples and enhance the performance of backbone model. Extensive experiments on six benchmark datasets demonstrate CSG's versatility in boosting various SGNN models. Future promising directions include exploring theoretical foundations of graph curriculum learning and devising potent methods for diverse downstream tasks on signed graphs.

\section*{Acknowledgment}
This study was supported by the major project of Science and Technology Department of Xinjiang (2024A02002). This study was also supported by the National Natural Science Foundation of China (W2411020, 32170645). We thank the high-performance computing platform at the National Key Laboratory of Crop Genetic Improvement in Huazhong Agricultural University.

\bibliographystyle{elsarticle-num} 
\bibliography{references}






\end{document}